\documentclass{iopjournal}
\usepackage{lmodern}
\usepackage{amsmath,amsfonts,amssymb}
\usepackage{amsthm}
\usepackage{ragged2e}
\usepackage{array}
\usepackage{textcomp}
\usepackage{url}
\usepackage{graphicx}
\usepackage{booktabs}
\usepackage{bm}
\usepackage{mathtools}
\IfFileExists{xcolor.sty}{\usepackage{xcolor}}{\usepackage{color}}
\providecolor{blue}{rgb}{0,0,1}  
\definecolor{revblue}{rgb}{0.0, 0.35, 0.7}  
\newif\ifrev \revfalse
\long\def\rev#1{\ifrev\begingroup\color{revblue}#1\endgroup\else#1\fi}
\usepackage{pgfplots}
\usepackage{tikz}
\usepgfplotslibrary{fillbetween}
\usetikzlibrary{patterns,arrows.meta,calc,positioning}

\definecolor{colExp}{RGB}{0,114,178}       
\definecolor{colPow}{RGB}{213,94,0}        
\definecolor{colStr}{RGB}{0,158,115}       
\definecolor{colFleet}{RGB}{170,68,153}    
\definecolor{colData}{RGB}{50,50,50}       

\newtheorem{theorem}{Theorem}
\newtheorem{corollary}{Corollary}

\newtheorem{proposition}{Proposition}
\newtheorem{definition}{Definition}
\newtheorem{remark}{Remark}
\newtheorem{assumption}{Assumption}

\renewenvironment{abstract}{%
      \vspace{16pt plus3pt minus3pt}
      {\color{gray}\hrule} \ \\
      \noindent \fontsize{11}{12}\selectfont {\bfseries Abstract}\\
      \rm\ignorespaces \justifying}{\vspace{3mm} {\color{gray}\hrule}}

\begin{document}
\justifying

\articletype{Paper}

\title{Generalization bounds and sample complexity for remaining useful life prediction from complete degradation trajectories}

\author{Huy Hoang Le$^1$\orcid{0009-0009-2138-5468} and Kim-Anh Nguyen$^{2,*}$\orcid{0000-0003-3408-847X}}

\affil{$^1$R\&D Department, PowerMore Ltd., Da Nang 550000, Vietnam}

\affil{$^2$Faculty of Electrical Engineering, University of Science and Technology, The University of Danang, Da Nang 550000, Vietnam}

\affil{$^*$Author to whom any correspondence should be addressed.}

\email{nkanh@dut.udn.vn}

\keywords{generalization bounds, minimax optimality, model misspecification, physics-informed prognostics, remaining useful life, sample complexity}

\begin{abstract}
Data-driven remaining useful life (RUL) prediction requires complete degradation trajectories for training, yet such run-to-failure data are scarce and expensive. Practitioners currently lack principled guidance on how many failure examples suffice for a given model and accuracy target. This paper develops a sample complexity framework for RUL prediction comprising seven main results organised around three themes. \rev{First, we establish fundamental learning rates: a distribution-free generalization bound shows that the uniform deviation of the mean squared error decreases as $O(B^{2}\sqrt{p/n})$, where $p$ is the model complexity and $n$ the number of trajectories, and a minimax lower bound proves that the $\Theta(p/n)$ rate is unimprovable.} \rev{Second, we quantify how domain knowledge accelerates learning: incorporating degradation physics reduces data requirements by up to two orders of magnitude for deep networks, a Bernstein-type analysis achieves the minimax-optimal $O(p/n)$ rate under high signal-to-noise conditions, and closed-form penalties reveal when an incorrectly assumed physics model hurts rather than helps.} \rev{Third, we characterise the impact of data quality: fleet variability induces an irreducible bias--variance tradeoff, while right-censored observations suffer an efficiency loss that depends critically on the degradation class.} Closed-form expressions are provided for exponential, power-law, and stretched-exponential degradation. \rev{Cross-domain validation against published turbofan, battery, and bearing benchmarks confirms the theoretical predictions within a factor of 2--3 on average.} The results yield practical guidelines for planning data collection, selecting model complexity, and evaluating physics model assumptions in prognostics applications.
\end{abstract}

\section{Introduction}
\label{sec:introduction}
\rev{Prognostic measurement, the estimation of a system's remaining useful life (RUL) from in-service sensor data, is a cornerstone of condition-based maintenance and a growing area of measurement science~\mbox{\cite{nguyen2014cbm, lei2018review, chen2024mstsensor}}.} \rev{Like any measurement task, its reliability depends on calibration data: in this case, complete degradation trajectories (CDTs) from equipment that has been run to failure. A fundamental metrology question therefore arises: \emph{how many run-to-failure measurements are needed to guarantee that a trained RUL predictor generalises to unseen units?}}

\rev{This question has acute practical significance in measurement planning and experimental design.} CDT data are expensive to obtain: accelerated life tests for turbofan engines cost millions of dollars, battery cycling to end-of-life takes months to years~\cite{severson2019battery}, and bearing degradation tests require dedicated test rigs~\cite{nectoux2012pronostia}. Consequently, benchmark datasets are small: the C-MAPSS simulation~\cite{saxena2008cmapss} contains 100--249 trajectories per subset, and the Severson battery dataset has 124 cells~\cite{severson2019battery}. Practitioners must decide, before investing in data collection, whether the planned sample size will suffice.

\rev{The dominant paradigm trains deep neural networks on CDTs: multivariate sensor recordings from equipment that has been operated until failure~\mbox{\cite{zheng2017lstm, li2018cnn, zhang2022transformer, mo2023transformer, ding2025foundation}}. While substantial progress has been made on model architectures and uncertainty quantification~\mbox{\cite{wu2024survey, li2024review, zhu2024bayesrul, chen2024mstsensor}}, a principled theory connecting sample size to prediction reliability has been absent.}

Statistical learning theory provides tools to answer such questions through sample complexity bounds~\cite{vapnik1998statistical, shalev2014understanding}. However, standard results assume independent and identically distributed (i.i.d.) data points, whereas RUL prediction involves \emph{trajectories}: each CDT is a time series $(X_i(t), T_i)$ where $T_i$ is the failure time. Moreover, the RUL prediction problem has unique structure: the target $R_i(t) = T_i - t$ is the RUL of unit $i$ at time $t$, deterministically linked to the failure time, degradation processes follow known physics, and industrial data are often right-censored (units removed from service before failure).

Despite the importance of the question, no prior work has developed sample complexity theory specific to RUL prediction. Existing generalization analyses for time series focus on forecasting~\cite{kuznetsov2017generalization} or general supervised learning with dependent data~\cite{mohri2012new}, not on the first-passage time structure of RUL. Physics-informed learning theory~\cite{raissi2019physics} addresses differential equation constraints but not the sample efficiency gains from degradation physics knowledge.

\rev{This paper fills this gap. Specifically, we address the following questions:}
\begin{enumerate}
\rev{\item \emph{Fundamental rates:} What is the minimum number of CDTs needed to guarantee a given prediction accuracy, and can this rate be improved?}
\rev{\item \emph{Physics leverage:} By how much does incorporating known degradation physics reduce data requirements, and what happens when the assumed physics is wrong?}
\rev{\item \emph{Data quality:} How do fleet variability, observation noise, and right-censoring affect the sample efficiency?}
\end{enumerate}

\rev{Our contributions, organised around these three questions, are as follows:}
\begin{enumerate}
\item A distribution-free generalization bound for RUL prediction that relates prediction error to the number of CDTs, model complexity, and degradation properties (Theorem~\ref{thm:sample_complexity}), with a minimax lower bound proving that the $\Theta(p/n)$ rate cannot be improved (Theorem~\ref{thm:minimax}).
\item A quantification of the sample complexity reduction achieved by physics-informed hypothesis classes (Theorem~\ref{thm:physics_reduction}), and an analysis of the penalty when the physics model is misspecified (Theorem~\ref{thm:misspecification}).
\item Fast convergence rates under low-noise (high signal-to-noise ratio (SNR)) degradation conditions via a Bernstein analysis, achieving the minimax-optimal $O(p/n)$ rate and providing explicit characterization of the noise exponent for each degradation class (Theorem~\ref{thm:fast_rates}).
\item An analysis of the bias--variance tradeoff induced by fleet variability in degradation parameters (Theorem~\ref{thm:fleet}).
\item Information-theoretic bounds on learning from right-censored data (Theorem~\ref{thm:censoring}).
\item Closed-form expressions for exponential, power-law, and stretched-exponential degradation classes across all results where elementary solutions exist; remaining cases (stretched-exponential censoring efficiency) reduce to one-dimensional quadratures.
\end{enumerate}

The remainder of the paper is organized as follows. Section~\ref{sec:related} reviews related work. Section~\ref{sec:formulation} formalizes the problem. Section~\ref{sec:results} presents the main results. Section~\ref{sec:validation} provides numerical validation. Section~\ref{sec:discussion} discusses practical guidelines, and Section~\ref{sec:conclusion} concludes.

\section{Related Work}
\label{sec:related}

\subsection{RUL Prediction Methods}

Deep learning for RUL prediction has progressed rapidly, evolving from early recurrent networks~\cite{heimes2008rnn, zheng2017lstm} through convolutional architectures~\cite{li2018cnn} to the recent adoption of attention-based and transformer models~\cite{zhang2022transformer, mo2023transformer}. \rev{More recently, foundation models and pre-trained representations have been explored for prognostics~\mbox{\cite{ding2025foundation}}, and Bayesian deep learning methods have improved uncertainty quantification~\mbox{\cite{zhu2024bayesrul}}.} Recent surveys~\rev{\cite{wu2024survey, li2024review, liu2025mstreview}} catalog an ever-growing library of architectural variants. \rev{Yet these surveys consistently highlight a persistent gap: while empirical results accumulate, the field lacks a theoretical understanding of \emph{when} a given architecture has enough data to generalise reliably.}

\subsection{Statistical Learning Theory}

The natural toolkit for such questions is statistical learning theory. Vapnik--Chervonenkis (VC) theory~\cite{vapnik1998statistical} and Rademacher complexity bounds~\cite{bartlett2002rademacher} provide distribution-free generalization guarantees for supervised learning with i.i.d.\ data. Extensions to dependent observations include mixing-based bounds~\cite{mohri2012new} and discrepancy-based approaches for non-stationary processes~\cite{kuznetsov2017generalization}. \rev{Recent work has also provided tighter generalisation bounds for deep networks via PAC-Bayes~\mbox{\cite{lotfi2024pacbayes}} and compression-based arguments~\mbox{\cite{arora2018compression}}.} However, none of these frameworks exploit the specific structure of RUL problems (monotonic degradation dynamics, first-passage time targets, and the availability of known degradation physics), leaving a gap between general-purpose theory and the needs of prognostics practitioners.

\subsection{Physics-Informed Learning}

A parallel line of research embeds physical knowledge directly into the learning process. Physics-informed neural networks~\cite{raissi2019physics} enforce differential equation constraints during training, and their application to prognostics~\cite{wang2024pinn, li2024review, zhao2026pirnn} has shown improved accuracy, especially when training data are scarce. \rev{Transfer learning and domain adaptation have also been applied to prognostics to mitigate data scarcity across operating conditions~\mbox{\cite{da2024transfer, ragab2023contrastive}}.} Despite these encouraging results, the theoretical basis for the sample efficiency gains from physics incorporation remains informal: existing analyses demonstrate empirical benefit but do not quantify \emph{how much} data are saved or under what conditions a misspecified physics model can hurt rather than help.

\subsection{Sample Size for Prognostics}

The question of how many failure trajectories are needed for reliable RUL prediction has received only empirical attention. Benchmark studies~\cite{ramasso2014benchmark, heimes2008rnn} have documented the effect of training set size on prediction accuracy, but no prior work provides a theoretical framework that predicts the minimum sample size from first principles. Our work fills this gap by developing the first sample complexity theory tailored to the CDT-based RUL prediction setting.

\section{Problem Formulation}
\label{sec:formulation}

\subsection{Degradation Model and CDT Data}

Consider a fleet of $n$ units, each instrumented with $d$ sensors. Unit~$i$ ($i = 1, \ldots, n$) follows a degradation process:
\begin{equation}
\label{eq:degradation}
X_i(t) = D(\tau_i; \bm{\theta}_i) + \sigma W_i(t), \quad t \in [0, T_i],
\end{equation}
where $D(\tau_i; \bm{\theta_i})$ is the deterministic degradation function with normalized time $\tau_i = t/T_i \in [0,1]$, $\bm{\theta}_i$ are unit-specific physics parameters drawn from a fleet distribution $\pi(\bm{\theta})$, $\sigma$ is the noise level, and $W_i(t)$ is a standard Wiener process. The failure time $T_i$ is determined by the first-passage condition $D(1; \bm{\theta}_i) = 1$ (the degradation function is normalized so that $D(0;\bm{\theta}) = 0$ and $D(1;\bm{\theta}) = 1$ for all $\bm{\theta}$, absorbing the physical failure threshold into the normalization).

\begin{remark}
Equation~\eqref{eq:degradation} describes a scalar health indicator. In practice, unit~$i$ produces a $d$-dimensional sensor signal $\bm{X}_i(t) \in \mathbb{R}^d$; the scalar degradation state $D(\tau)$ can be viewed as a latent variable that drives all $d$ channels. The hypothesis class defined below takes the full $d$-dimensional observation as input, and the theoretical results (Theorems~\ref{thm:sample_complexity}--\ref{thm:misspecification}) hold for this multivariate setting since the key quantity, the per-trajectory loss, remains scalar and bounded.
\end{remark}

The SNR of the degradation process is defined as
\begin{equation}
\label{eq:snr}
\Gamma = \frac{\int_0^1 (dD/d\tau)^2\,d\tau}{\sigma^2},
\end{equation}
measuring the strength of the deterministic degradation signal relative to the measurement noise (the numerator is the $L^2[0,1]$ norm squared of the degradation rate). This quantity governs the convergence rate (Theorem~\ref{thm:fast_rates}) and the censoring efficiency (Theorem~\ref{thm:censoring}).

The RUL at time $t < T_i$ is $R_i(t) = T_i - t$. A CDT dataset consists of $n$ complete trajectories:
\begin{equation}
\label{eq:dataset}
\mathcal{S}_n = \{(X_i(\cdot), T_i)\}_{i=1}^n.
\end{equation}

\subsection{Degradation Model Classes}
\label{sec:deg_classes}

We consider three classes spanning major industrial domains:

\emph{Exponential degradation} (thermal fatigue~\cite{bayerer2008igbt}, chemical aging~\cite{vetter2005battery}):
\begin{equation}
\label{eq:exp_deg}
D(\tau; \alpha) = \frac{1 - e^{-\alpha\tau}}{1 - e^{-\alpha}}, \quad \alpha > 0.
\end{equation}

\emph{Power-law degradation} (crack growth~\cite{paris1963crack}):
\begin{equation}
\label{eq:pow_deg}
D(\tau; \beta) = \tau^\beta, \quad \beta > 1.
\end{equation}

\emph{Stretched-exponential degradation} (battery capacity fade~\cite{vetter2005battery, attia2022knee}):
\begin{equation}
\label{eq:str_deg}
D(\tau; \alpha_b, b) = \frac{1 - e^{-\alpha_b \tau^b}}{1 - e^{-\alpha_b}}, \quad \alpha_b > 0,\; b \in (0, 1].
\end{equation}

Fig.~\ref{fig:degradation_classes} illustrates the three classes and their corresponding RUL profiles.

\begin{figure}[!t]
\centering
\begin{tikzpicture}
\begin{axis}[
    name=panelA,
    at={(-0.8cm,0)}, anchor=outer east,
    width=0.36\textwidth, height=5.5cm,
    xlabel={Normalized time $\tau = t/T$},
    ylabel={Degradation $D(\tau)$},
    xmin=0, xmax=1.05, ymin=-0.02, ymax=1.08,
    xtick={0, 0.2, 0.4, 0.6, 0.8, 1.0},
    ytick={0, 0.2, 0.4, 0.6, 0.8, 1.0},
    grid=both, grid style={gray!15, thin},
    tick label style={font=\footnotesize},
    label style={font=\small},
    legend to name=fig1legend,
    legend style={
        font=\scriptsize,
        draw=none,
        fill=none,
        row sep=1pt,
        legend columns=4,
        /tikz/every even column/.append style={column sep=6pt},
    },
    every axis plot/.append style={line width=1.5pt},
    title={\small\textbf{(a)} Degradation classes},
    title style={at={(0.5,1.02)}, anchor=south},
]
\addplot[gray!50, dashed, thin, domain=0:1, samples=50] {x};
\addlegendentry{Linear}
\addplot[colExp, solid, domain=0:1, samples=200,
    mark=o, mark repeat=25, mark size=1.8pt, mark options={solid, fill=white, colExp}]
    {(1-exp(-2*x))/(1-exp(-2))};
\addlegendentry{Exp.\ ($\alpha\!=\!2$)}
\addplot[colStr, densely dashdotted, domain=0:1, samples=200,
    mark=triangle*, mark repeat=25, mark size=2pt, mark options={solid, colStr}]
    {(1-exp(-3*x^0.7))/(1-exp(-3))};
\addlegendentry{Str.-exp.\ ($b\!=\!0.7$)}
\addplot[colPow, densely dashed, domain=0:1, samples=200,
    mark=square*, mark repeat=25, mark size=1.8pt, mark options={solid, colPow}]
    {x^3};
\addlegendentry{Power-law ($\beta\!=\!3$)}
\end{axis}

\begin{axis}[
    name=panelB,
    at={(0.8cm,0)}, anchor=outer west,
    width=0.36\textwidth, height=5.5cm,
    xlabel={Normalized time $\tau = t/T$},
    ylabel={Misspecification residual},
    xmin=0, xmax=1.05, ymin=-0.01, ymax=0.50,
    xtick={0, 0.2, 0.4, 0.6, 0.8, 1.0},
    ytick={0, 0.1, 0.2, 0.3, 0.4, 0.5},
    grid=both, grid style={gray!15, thin},
    tick label style={font=\footnotesize},
    label style={font=\small},
    every axis plot/.append style={line width=1.3pt},
    title={\small\textbf{(b)} $|D(\tau) - \tau|$: penalty for assuming linear},
    title style={at={(0.5,1.02)}, anchor=south, align=center},
]
\addplot[name path=zero_b, draw=none, domain=0:1, samples=50, forget plot] {0};
\addplot[name path=exp_r, colExp, solid, domain=0:1, samples=200,
    mark=o, mark repeat=25, mark size=1.8pt, mark options={solid, fill=white, colExp},
    forget plot]
    {abs( (1-exp(-2*x))/(1-exp(-2)) - x )};
\addplot[colExp!15, forget plot] fill between[of=exp_r and zero_b];
\addplot[name path=pow_r, colPow, densely dashed, domain=0:1, samples=200,
    mark=square*, mark repeat=25, mark size=1.8pt, mark options={solid, colPow},
    forget plot]
    {abs( x^3 - x )};
\addplot[pattern=north east lines, pattern color=colPow!35, forget plot] fill between[of=pow_r and zero_b];
\addplot[colStr, densely dashdotted, domain=0:1, samples=200,
    mark=triangle*, mark repeat=25, mark size=2pt, mark options={solid, colStr},
    forget plot]
    {abs( (1-exp(-3*x^0.7))/(1-exp(-3)) - x )};
\end{axis}
\node[anchor=north] at ($(panelA.south)!0.5!(panelB.south)+(0,-1.3cm)$) {\ref{fig1legend}};
\end{tikzpicture}
\caption{%
\textbf{(a)}~Three degradation classes normalized to $D(0)\!=\!0$, $D(1)\!=\!1$. Exponential concentrates degradation rate early, power-law late; this determines censoring efficiency (Theorem~\ref{thm:censoring}).
\textbf{(b)}~Misspecification residual $|D(\tau)-\tau|$ when a linear model is assumed. The shaded area is proportional to $\Delta$ in Theorem~\ref{thm:misspecification}: power-law incurs $2.5\!\times$ the penalty of exponential, explaining why physics constraints are most valuable for bearing prognostics.}
\label{fig:degradation_classes}
\end{figure}

\subsection{Hypothesis Class and Loss Function}

An RUL predictor is a function that maps sensor observations and current time to a predicted RUL. We consider a hypothesis class $\mathcal{H}$ of bounded candidate predictors $h : \mathbb{R}^d \times \mathbb{R}_+ \to \mathbb{R}_+$:
\begin{equation}
\label{eq:hyp_class}
\mathcal{H} = \left\{h : \|h\|_\infty \leq B,\; h \in \mathcal{F}\right\},
\end{equation}
where $\mathcal{F}$ is a base function class (e.g., neural networks with bounded weights) and $B$ is a bound on the maximum predicted RUL.

The population risk under the mean squared error (MSE) loss is:
\begin{equation}
\label{eq:pop_risk}
\mathcal{R}(h) = \mathbb{E}_{\bm{\theta} \sim \pi}\!\left[\frac{1}{T}\int_0^T \big(h(X(t), t) - R(t)\big)^2 dt\right],
\end{equation}
where the expectation is over the fleet distribution. The empirical risk on the CDT dataset is:
\begin{equation}
\label{eq:emp_risk}
\hat{\mathcal{R}}_n(h) = \frac{1}{n}\sum_{i=1}^n \frac{1}{T_i}\int_0^{T_i} \big(h(X_i(t), t) - R_i(t)\big)^2 dt.
\end{equation}

\begin{definition}[Sample Complexity]
\label{def:sample_complexity}
The sample complexity $n^*(\varepsilon, \delta, \mathcal{H})$ is the minimum number of CDTs such that, with probability at least $1-\delta$:
\begin{equation}
\sup_{h \in \mathcal{H}} \left|\mathcal{R}(h) - \hat{\mathcal{R}}_n(h)\right| \leq \varepsilon,
\end{equation}
where $\varepsilon > 0$ is the target accuracy (in MSE units) and $\delta \in (0,1)$ is the confidence parameter.
\end{definition}

\subsection{Complexity Measures}

We use two complexity measures. The \emph{pseudo-dimension}~\cite{pollard1990empirical} $\mathrm{Pdim}(\mathcal{H})$ generalizes VC dimension to real-valued functions. When the base class $\mathcal{F}$ is a neural network with $W$ total weights and $L$ layers using piecewise-linear activations, we write $\mathcal{H}_{\mathrm{NN}}$ for the resulting hypothesis class; its pseudo-dimension satisfies~\cite{bartlett2019nearly}:
\begin{equation}
\label{eq:pdim_nn}
\mathrm{Pdim}(\mathcal{H}_{\mathrm{NN}}) = O(WL \log W).
\end{equation}

The \emph{Rademacher complexity}~\cite{bartlett2002rademacher} captures the richness of $\mathcal{H}$ relative to a sample:
\begin{equation}
\label{eq:rademacher}
\mathfrak{R}_n(\mathcal{H}) = \mathbb{E}\!\left[\sup_{h \in \mathcal{H}} \frac{1}{n}\sum_{i=1}^n \xi_i \, h(Z_i)\right],
\end{equation}
where $\xi_i$ are i.i.d.\ Rademacher random variables and $Z_i = (X_i(\cdot), T_i)$ denotes the $i$-th sample (CDT trajectory).

\subsection{Standing Assumptions}

The following two assumptions underpin all main results in Section~\ref{sec:results}.

\begin{assumption}[Bounded Degradation]
\label{ass:bounded}
There exist constants $T_{\min}, T_{\max} > 0$ such that $T_{\min} \leq T_i \leq T_{\max}$ for all units, and the noise level $\sigma > 0$ is finite. The RUL target is capped at $R_i(t) = \min(T_i - t,\, B)$ ~\cite{heimes2008rnn, zheng2017lstm}.
\end{assumption}

\begin{assumption}[Lipschitz Degradation]
\label{ass:lipschitz}
For each $\bm{\theta}$ in the support of $\pi$, the degradation function $D(\tau; \bm{\theta})$ is Lipschitz continuous in $\tau$ with a uniform constant $L_D > 0$. This ensures that the per-trajectory loss is bounded and that the RUL function $R(t) = T - t$ varies smoothly over time.
\end{assumption}

\begin{remark}[\rev{Relaxation to Piecewise-Lipschitz Degradation}]
\label{rem:piecewise}
\rev{Assumption~\ref{ass:lipschitz} can be relaxed for degradation exhibiting abrupt rate changes near end of life (``failure mutation'' or ``knee'' phenomena~\mbox{\cite{attia2022knee}}).  Specifically, if $D(\tau;\bm{\theta})$ is Lipschitz on each of $K$ subintervals $[\tau_{k-1},\tau_k]$ with local constants $L_k$ but possibly discontinuous derivatives at the boundaries, the per-trajectory loss $\ell_i(h) \in [0, B^2]$ remains bounded because the RUL cap $B$ in Assumption~\ref{ass:bounded} absorbs any local divergence.  Consequently, the distribution-free results (Theorems~\ref{thm:sample_complexity}, \ref{thm:physics_reduction}, \ref{thm:fleet}, \ref{thm:minimax}, and~\ref{thm:misspecification}) hold under the weaker piecewise-Lipschitz condition without modification.  The SNR-dependent results (Theorems~\ref{thm:censoring} and~\ref{thm:fast_rates}) remain valid in each Lipschitz segment; moreover, when the degradation rate increases sharply near $\tau=1$, the SNR concentrates at end of life, which \emph{increases} the Bernstein exponent $\kappa$ and makes the existing bounds conservative.}
\end{remark}

\begin{remark}[\rev{Role of Sensor Dimensionality}]
\label{rem:sensors}
\rev{The number of sensors $d$ enters the sample complexity through the pseudo-dimension $p$ of the hypothesis class.  For unconstrained neural networks, $p$ depends on the total weight count $W$ (equation~\eqref{eq:pdim_nn}), which is only indirectly affected by $d$ through the input layer.  For physics-informed models with a linear residual, $p_{\mathcal{D}} = d + 1 + |\bm{\theta}|$, so sensor count has a linear but moderate effect.  For instance, moving from $d=2$ (bearing) to $d=21$ (turbofan) with exponential physics ($|\bm{\theta}|=1$) increases $p_\mathcal{D}$ from~4 to~23, a factor of 6 -- negligible compared with the $10$--$100\times$ reduction from physics incorporation (Theorem~\ref{thm:physics_reduction}).  The convergence rates and the qualitative ordering of degradation classes by data efficiency are invariant to $d$.}
\end{remark}

\section{Main Theoretical Results}
\label{sec:results}

\subsection{Sample Complexity for RUL Prediction}

The fundamental challenge in applying standard learning theory to RUL is that each ``sample'' is an entire trajectory, not a single data point. We handle this by treating each CDT as a single draw from the population of degradation processes.

\begin{theorem}[Sample Complexity for RUL Prediction]
\label{thm:sample_complexity}
Under Assumptions~\ref{ass:bounded}--\ref{ass:lipschitz}, for any hypothesis class $\mathcal{H}$ with pseudo-dimension $p = \mathrm{Pdim}(\mathcal{H})$ and bound $\|h\|_\infty \leq B$, the following holds with probability at least $1-\delta$ over the draw of $n$ CDTs:
\begin{equation}
\label{eq:gen_bound}
\sup_{h \in \mathcal{H}} \bigl|\mathcal{R}(h) - \hat{\mathcal{R}}_n(h)\bigr| \leq B^2\!\sqrt{\frac{2p\log(2en/p) + 2\log(4/\delta)}{n}},
\end{equation}
where $e$ denotes Euler's number. Setting this bound equal to $\varepsilon$ and solving for $n$ gives the sample complexity $n^*(\varepsilon, \delta, \mathcal{H}) = O(B^4 p /\varepsilon^2)$ up to logarithmic factors. For a neural network with $W$ weights and $L$ layers, $p = O(WL\log W)$, giving:
\begin{equation}
\label{eq:nn_bound}
n^*(\varepsilon, \delta, \mathcal{H}_{\mathrm{NN}}) = O\!\left(\frac{B^4 \, WL\log W}{\varepsilon^2}\right).
\end{equation}
\end{theorem}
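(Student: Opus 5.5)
The argument treats each complete degradation trajectory $Z_i=(X_i(\cdot),T_i)$ as one i.i.d.\ draw from the fleet population. The per-trajectory loss is
\[
\ell_h(Z)=\frac{1}{T}\int_0^T \bigl(h(X(t),t)-R(t)\bigr)^2\,dt .
\]
With this, $\mathcal{R}(h)=\mathbb{E}[\ell_h(Z)]$ and $\hat{\mathcal{R}}_n(h)=\frac1n\sum_i \ell_h(Z_i)$, so the quantity to control is the uniform deviation of an empirical mean of i.i.d.\ bounded variables over the loss class $\ell_{\mathcal{H}}=\{\ell_h:h\in\mathcal{H}\}$. The within-trajectory time dependence is absorbed into $\ell_h$, and no mixing argument is needed.

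\textbf{Step 1 (boundedness).} By Assumption~\ref{ass:bounded}, $0\le h\le B$ and $0\le R(t)=\min(T-t,B)\le B$. Hence the integrand lies in $[0,B^2]$, and $\ell_h(Z)\in[0,B^2]$ for every $Z$; Assumption~\ref{ass:lipschitz} guarantees measurability and finiteness of the time integral. McDiarmid's inequality applied to $\Phi=\sup_h|\mathcal{R}(h)-\hat{\mathcal{R}}_n(h)|$ has bounded differences $B^2/n$. Combined with symmetrization, this gives, with probability at least $1-\delta$,
\[
\Phi\le 2\,\mathbb{E}\,\mathfrak{R}_n(\ell_{\mathcal{H}})+B^2\sqrt{\log(2/\delta)/(2n)} .
\]
Alternatively, I would use the classical two-sided uniform-convergence bound with a growth function, which directly yields the $\log(4/\delta)$ appearing in \eqref{eq:gen_bound}.

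\textbf{Step 2 (complexity of the loss class) --- main obstacle.} We need $\mathrm{Pdim}(\ell_{\mathcal{H}})$, or a covering number of $\ell_{\mathcal{H}}$, to be controlled by $p=\mathrm{Pdim}(\mathcal{H})$. Two routes are available.
\begin{itemize}
\item \emph{Route (a).} Since $u\mapsto(u-r)^2$ is $2B$-Lipschitz on $[0,B]$, Talagrand's contraction reduces matters to $\mathcal{H}$ evaluated pointwise. The difficulty is that $\ell_h$ is a time \emph{average} of such compositions, not a single evaluation.
\item \emph{Route (b), which I would try first.} Work with $L^1(P_n)$ covering numbers. An $\epsilon$-cover of $\mathcal{H}$ in $L^1$ with respect to the empirical measure on the time-points pooled over all trajectories, weighted by $dt/T_i$, induces a $2B\epsilon$-cover of $\ell_{\mathcal{H}}$ in $L^1(P_n)$ on trajectories. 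This holds because the average of the integral differences is at most $2B$ times the averaged $|h-h'|$. Haussler's bound $N\le e(p+1)(2eB/\epsilon)^p$ then applies, and it requires a discretization of the time integral (a Riemann-sum argument, justified by boundedness and dominated convergence) so that the measure on time-points is finitely supported.
\end{itemize}
Plugging the cover into the standard bound, or via Sauer--Shelah-type growth $\le(en/p)^p$ after thresholding, gives a $\log$-cardinality of order $p\log(2en/p)$.

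\textbf{Step 3 (assembly).} Combining Steps 1 and 2 through the standard pseudo-dimension uniform convergence theorem for $[0,B^2]$-valued classes (as in Anthony--Bartlett or Shalev-Shwartz--Ben-David), with the scale $B^2$ factored out, yields
\[
B^2\sqrt{\bigl(2p\log(2en/p)+2\log(4/\delta)\bigr)/n},
\]
which is \eqref{eq:gen_bound}; constants are tracked only loosely. To obtain the sample complexity, set the right-hand side equal to $\varepsilon$, which gives $n\gtrsim B^4\bigl(p\log(n/p)+\log(1/\delta)\bigr)/\varepsilon^2$. The standard lemma $n\ge 2a\log a\Rightarrow n\ge a\log n$ removes the implicit $n$, giving $n^*=\tilde O(B^4p/\varepsilon^2)$. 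Substituting $p=O(WL\log W)$ from \eqref{eq:pdim_nn} gives \eqref{eq:nn_bound}.

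The expected weak point is Step 2: getting exactly the constant $2p\log(2en/p)$ for the integrated loss class rather than the pointwise class. If the cover argument proves lossy, I would state the bound with $p$ replaced by $\mathrm{Pdim}(\ell_{\mathcal{H}})$ and show that it is $O(p)$ up to logarithmic factors.
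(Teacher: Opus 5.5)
Your skeleton matches the paper's. You treat each CDT as one scalar loss $\ell_h(Z_i)\in[0,B^2]$, use independence across trajectories, apply a uniform-convergence bound at scale $M=B^2$, then invert for $n^*$ and substitute $p=O(WL\log W)$. The real difference is your Step~2, and there you are more careful than the paper. Its proof simply cites a pseudo-dimension bound for bounded losses with $p=\mathrm{Pdim}(\mathcal{H})$. It never addresses how the squaring and the time integral affect the complexity of the loss class $\ell_{\mathcal{H}}$.

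Your route~(b) supplies the missing transfer correctly. The estimate $|\ell_h(Z)-\ell_{h'}(Z)|\le 2B\,T^{-1}\!\int_0^T|h-h'|\,dt$ gives $\|\ell_h-\ell_{h'}\|_{L^1(P_n)}\le 2B\,\|h-h'\|_{L^1(\mu_n)}$ for the pooled occupation measure $\mu_n$. Haussler's covering bound is uniform over probability measures, so it controls the right side through $p$ alone. The cost is the exact constant: you will get $B^2\sqrt{C\,(p\log(n/p)+\log(1/\delta))/n}$ for some absolute $C$, not the displayed $2$. Do not spend effort matching it. The paper quotes its constant rather than deriving it, and the classical Vapnik--Chervonenkis bound of this shape reads $\sqrt{(8p\log(2en/p)+8\log(4/\delta))/n}$. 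The $n^*$ statement and the neural-network corollary are unaffected.

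Two corrections.

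\emph{Commit to route~(b).} Drop everything that needs $\mathrm{Pdim}(\ell_{\mathcal{H}})$: the growth-function version in Step~1, Sauer--Shelah after thresholding in Step~2, and the fallback $\mathrm{Pdim}(\ell_{\mathcal{H}})=\tilde O(p)$. Time integration does not preserve pseudo-dimension. For example, let $\mathcal{H}$ consist of $B$ times indicators of disks in the $(x,t)$-plane, so $p=3$. Because $(B-R)^2>R^2$ wherever $R<B/2$, a trajectory's loss exceeds its baseline exactly when its graph spends positive time in a disk placed in that late-life region. Continuous graphs can be routed through any prescribed subset of $2^m$ small disjoint such disks, so $\ell_{\mathcal{H}}$ shatters $m$ trajectories for every $m$. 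The shattering margin shrinks with $m$, which is why scale-sensitive covers still work. Consequently Step~3 must invoke the covering-number form of uniform convergence (Pollard-type, with empirical $L^1$ covers), not the pseudo-dimension theorem you name there.

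\emph{Drop the Riemann-sum discretization.} Only finitely many pairwise distances of a finite packing in $L^1(\mu_n)$ need to be preserved. Sampling from $\mu_n$ therefore transfers the packing to an empirical measure, and Haussler applies directly. Dominated convergence alone would not justify Riemann sums for a merely measurable $h$ in any case. Route~(a) also goes through if you replace contraction by Dudley chaining, since Jensen gives $\|\ell_h-\ell_{h'}\|_{L^2(P_n)}\le 2B\,\|h-h'\|_{L^2(\mu_n)}$.
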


\begin{proof}
\rev{\emph{Step~1 (Trajectory-level sampling).}} Each CDT produces a bounded loss
\[
\ell_i(h) = T_i^{-1}\!\int_0^{T_i}\!\bigl(h(X_i(t),t) - R_i(t)\bigr)^2 dt \;\in [0, B^2].
\]
\rev{This time integral is a \emph{deterministic functional} of the $i$-th trajectory: given a realisation of $(X_i(\cdot), T_i)$, the loss is a single real number, not a stochastic process. Although the integrand exhibits temporal correlation (sensor readings at nearby times $t, t'$ within the same trajectory are correlated through $W_i(t)$), this correlation is internal to the computation of $\ell_i(h)$ and does not affect its distributional properties as a \emph{scalar} random variable.}

\rev{\emph{Step~2 (Independence across trajectories).}} \rev{Since trajectories are drawn independently from the fleet distribution, the} losses $\{\ell_i(h)\}_{i=1}^n$ are i.i.d.\ \rev{bounded random variables in $[0, B^2]$.  No condition on the temporal structure \emph{within} each trajectory is required; only independence and identical distribution \emph{across} trajectories.}

\rev{\emph{Step~3 (Application of the pseudo-dimension bound).}} The result then follows from the pseudo-dimension-based uniform convergence bound for bounded real-valued losses~\cite{pollard1990empirical, shalev2014understanding}: for i.i.d.\ samples bounded in $[0, M]$ with hypothesis class of pseudo-dimension $p$, the uniform deviation is bounded by $M\sqrt{(2p\log(2en/p) + 2\log(4/\delta))/n}$. Setting $M = B^2$ yields~\eqref{eq:gen_bound}. The neural network bound follows from~\eqref{eq:pdim_nn}.
\end{proof}

\begin{remark}[\rev{Why Temporal Correlation is Irrelevant}]
\label{rem:temporal}
\rev{Temporal correlation would matter if one applied uniform convergence to individual time-point losses $\{(h(X_i(t_j), t_j) - R_i(t_j))^2\}_{i,j}$, where losses at different times $t_j, t_k$ within the same trajectory are correlated; mixing-based bounds~\mbox{\cite{mohri2012new, kuznetsov2017generalization}} would then be needed. Our approach avoids this entirely by integrating over time \emph{before} applying the concentration inequality, producing one i.i.d.\ scalar per trajectory.  This is analogous to the standard practice in functional data analysis, where each function is treated as a single observation despite being infinite-dimensional~\mbox{\cite{ramsay2005functional}}.}
\end{remark}

\begin{remark}
The $1/\varepsilon^2$ scaling is fundamental: halving the desired error quadruples the data requirement. The linear dependence on $p$ shows that model complexity directly inflates sample needs. The distribution-free bound~\eqref{eq:gen_bound} is necessarily conservative; when combined with an empirically calibrated constant (Section~\ref{sec:validation}), it predicts $n^* \approx 200$ for the C-MAPSS FD001 benchmark ($B \approx 125$ cycles, $W \approx 5 \times 10^3$ for a typical LSTM) at a target RMSE of $20$, consistent with the 100 training trajectories producing RMSE of 12--14~\cite{ramasso2014benchmark}.
\end{remark}

\subsection{Physics-Informed Sample Complexity Reduction}

When the degradation physics is known, the hypothesis class can be restricted to functions consistent with the physical model. This reduces the effective pseudo-dimension and hence the sample requirement.

\begin{definition}[Physics-Constrained Hypothesis Class]
\label{def:physics_class}
Given degradation class $\mathcal{D}$ (e.g., exponential~\eqref{eq:exp_deg}), the physics-constrained class is:
\begin{equation}
\mathcal{H}_{\mathcal{D}} = \bigl\{h \in \mathcal{H} : h(X(t), t) = g\!\left(\hat{D}^{-1}(X(t); \hat{\bm{\theta}}), t\right),\; g \in \mathcal{G}\bigr\},
\end{equation}
where $\hat{D}^{-1}$ is the inverse degradation function used to estimate the degradation state, $\hat{\bm{\theta}}$ are estimated physics parameters, and $\mathcal{G}$ is a simpler residual mapping.
\end{definition}

\begin{theorem}[Physics Reduces Sample Complexity]
\label{thm:physics_reduction}
Let $\mathcal{H}$ be the unconstrained hypothesis class with $\mathrm{Pdim}(\mathcal{H}) = p$, and $\mathcal{H}_{\mathcal{D}}$ the physics-constrained class with $\mathrm{Pdim}(\mathcal{H}_{\mathcal{D}}) = p_{\mathcal{D}}$. If the true degradation belongs to class $\mathcal{D}$ (well-specified), then:
\begin{equation}
\label{eq:reduction_ratio}
\frac{n^*_{\mathcal{D}}}{n^*} = \frac{p_{\mathcal{D}}}{p} \cdot \left(1 + O\!\left(\frac{\log(p/p_{\mathcal{D}})}{p_{\mathcal{D}}}\right)\right).
\end{equation}
For the degradation classes in Section~\ref{sec:deg_classes}:
\begin{enumerate}
\item[(i)] Exponential: $p_{\mathcal{D}} = p_{\mathcal{G}} + 1$, where $p_{\mathcal{G}} = \mathrm{Pdim}(\mathcal{G})$, yielding reduction $\approx (p_\mathcal{G}+1)/p$.
\item[(ii)] Power-law: $p_{\mathcal{D}} = p_{\mathcal{G}} + 1$, same reduction.
\item[(iii)] Stretched-exponential: $p_{\mathcal{D}} = p_{\mathcal{G}} + 2$, slightly less reduction due to the extra parameter $b$.
\end{enumerate}
When $\mathcal{G}$ is a low-complexity residual model (e.g., linear: $p_\mathcal{G} = d+1$), the reduction factor is approximately $(d+2)/p$, which can be $10$--$100\times$ for deep networks.
\end{theorem}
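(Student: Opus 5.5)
The plan is to derive everything from the explicit bound \eqref{eq:gen_bound} of Theorem~\ref{thm:sample_complexity}, applied separately to $\mathcal{H}$ and to $\mathcal{H}_{\mathcal{D}}$. Both classes satisfy $\|h\|_\infty\le B$ and Assumptions~\ref{ass:bounded}--\ref{ass:lipschitz}, so both inherit the same loss range $[0,B^2]$.

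\emph{Step 1: implicit expressions for the two sample complexities.} Setting the right-hand side of \eqref{eq:gen_bound} equal to $\varepsilon$ gives
\[
n^* = \frac{B^4}{\varepsilon^2}\bigl(2p\log(2en^*/p)+2\log(4/\delta)\bigr),
\]
and the same expression with $p$ replaced by $p_{\mathcal{D}}$ defines $n^*_{\mathcal{D}}$.

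\emph{Step 2: the ratio.} Dividing the two fixed-point equations, the leading factor is $p_{\mathcal{D}}/p$. The correction is the ratio of the logarithmic terms,
\[
\frac{\log(2en^*_{\mathcal{D}}/p_{\mathcal{D}})+p_{\mathcal{D}}^{-1}\log(4/\delta)}{\log(2en^*/p)+p^{-1}\log(4/\delta)}.
\]
To first order, $n^*/p$ and $n^*_{\mathcal{D}}/p_{\mathcal{D}}$ are both of size $B^4\varepsilon^{-2}\log(\cdot)$. The leading logarithms therefore agree, up to a discrepancy of order $\log\log$ and a $\log(p/p_{\mathcal{D}})$ term that enters through the fixed point. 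Expanding
\[
\log(2en^*_{\mathcal{D}}/p_{\mathcal{D}}) = \log(2en^*/p) + \log\bigl((n^*_{\mathcal{D}}/p_{\mathcal{D}})/(n^*/p)\bigr)
\]
and linearizing should produce the factor $1+O(\log(p/p_{\mathcal{D}})/p_{\mathcal{D}})$. The $\delta$-terms contribute at most $O(1/p_{\mathcal{D}})$ relative error, and this is absorbed into the same bracket. This step is the main obstacle. The fixed-point equation must be controlled carefully, for instance by iterating it once starting from $n_0 = B^4 p/\varepsilon^2$, and the stated form of the error term is not canonical. It will likely only hold under a regime assumption such as $\log(B^4/\varepsilon^2)\gtrsim\log(p/p_{\mathcal{D}})$, which I would state explicitly. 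Failing that, I would read the $O(\cdot)$ as hiding such regime constants.

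\emph{Step 3: the pseudo-dimensions for each class.} By Definition~\ref{def:physics_class}, $h = g\circ(\hat{D}^{-1}(\cdot;\hat{\bm\theta}),\mathrm{id})$. For exponential and power-law degradation, $\hat{D}^{-1}$ is a fixed monotone transform indexed by one scalar parameter, $\alpha$ or $\beta$. For fixed $\hat{\bm\theta}$, composing with an injective reparametrization does not increase the pseudo-dimension. Freeing the one-dimensional parameter adds at most one to the dimension: I would show this by a standard counting argument on sign patterns, or accept it as the usual parameter-count heuristic. This gives $p_{\mathcal{D}}=p_{\mathcal{G}}+1$ in cases (i) and (ii). The stretched-exponential class has two parameters $(\alpha_b,b)$, which gives $p_{\mathcal{D}}=p_{\mathcal{G}}+2$. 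If $\mathcal{G}$ consists of affine maps of $(d$ features$)\to\mathbb{R}$, then $p_{\mathcal{G}}=d+1$ (the pseudo-dimension of affine functions), so $p_{\mathcal{D}}\approx d+2$.

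\emph{Step 4: the numerical illustration.} Plugging into \eqref{eq:reduction_ratio} gives a reduction of about $(d+2)/p$. Using \eqref{eq:pdim_nn} with $W$ in the thousands gives $p\gg 10^2$, while $d+2$ is about $10$--$20$. This yields the claimed $10$--$100\times$ range.
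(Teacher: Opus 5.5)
Your proposal follows the paper's own proof: bound $\mathrm{Pdim}(\mathcal{H}_{\mathcal{D}})\le p_{\mathcal{G}}+|\bm{\theta}|$ by counting physics parameters ($|\bm{\theta}|=1,1,2$), then substitute $p$ and $p_{\mathcal{D}}$ into \eqref{eq:gen_bound}; the paper simply asserts this bound as a ``composition property'', so your per-parameter $+1$ step rests on the same parameter-count heuristic (a sign-pattern count would not give exactly $+1$ for these parameter-transcendental families). Your Step~2 is easier than you fear, because in the variable $u=n/p$ both fixed points solve $u=2(B^4/\varepsilon^2)\bigl(\log(2eu)+p^{-1}\log(4/\delta)\bigr)$: the leading logarithms cancel identically, no $\log(p/p_{\mathcal{D}})$ arises from them, and the only correction comes from the $\delta$-term, of relative size $O\bigl(\log(4/\delta)\,(p_{\mathcal{D}}^{-1}-p^{-1})\bigr)$, which is $O(\log(p/p_{\mathcal{D}})/p_{\mathcal{D}})$ for fixed $\delta$ since $1-1/x\le\log x$, so no regime assumption is needed.
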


\begin{proof}
The physics-constrained class decomposes prediction into physics-based state estimation ($\hat{D}^{-1}$, adding $|\bm{\theta}|$ free parameters) and a residual mapping ($g \in \mathcal{G}$). By the composition property of pseudo-dimension~\cite{bartlett2019nearly}, $\mathrm{Pdim}(\mathcal{H}_{\mathcal{D}}) \leq p_{\mathcal{G}} + |\bm{\theta}|$, where $|\bm{\theta}|$ is the number of physics parameters. Substituting into~\eqref{eq:gen_bound} gives the reduction. The degradation-specific results follow from $|\bm{\theta}| = 1$ for exponential and power-law, $|\bm{\theta}| = 2$ for stretched-exponential.
\end{proof}

Table~\ref{tab:reduction} reports predicted reduction factors for representative architectures.

\begin{table}[!t]
\centering
\caption{Predicted Sample Complexity Reduction From Physics (FD001: $d\!=\!21$, Linear $\mathcal{G}$)}
\label{tab:reduction}
\begin{tabular}{llccc}
\toprule
\textbf{Architecture} & $\bm{p}$ & $\bm{p_{\mathcal{D}}}$ & \textbf{Ratio} & \textbf{Reduction}\\
\midrule
Linear ($d\!=\!21$) & 22 & 23 & 1.05 & None\\
LSTM ($W\!=\!5\mathrm{k}$) & 320 & 23 & 0.07 & 93\%\\
CNN ($W\!=\!10\mathrm{k}$) & 640 & 23 & 0.04 & 96\%\\
Transformer ($W\!=\!50\mathrm{k}$) & 3200 & 23 & 0.007 & 99\%\\
\bottomrule
\end{tabular}
\end{table}

\subsection{Fleet Variability and Generalization}

In practice, degradation parameters vary across units due to manufacturing tolerances, operating conditions, and environmental factors. This unit-to-unit variation within a population of nominally identical equipment, termed \emph{fleet variability}, means that even same-type machines exhibit different degradation rates and failure times. This variability has a dual effect on learning.

\begin{theorem}[Fleet Variability Bias--Variance Decomposition]
\label{thm:fleet}
Let the degradation parameter $\bm{\theta}$ be drawn from a fleet distribution $\pi$ with mean $\bar{\bm{\theta}}$ and covariance $\bm{\Sigma}_\pi$. For any predictor $h$ trained on $n$ CDTs from the fleet, the population risk decomposes as:
\begin{equation}
\label{eq:fleet_decomp}
\mathcal{R}(h) = \underbrace{\mathcal{R}_{\bar{\bm{\theta}}}(h)}_{\text{nominal risk}} + \underbrace{\sigma_{\mathrm{fleet}}^2}_{\text{fleet variance}} + \underbrace{\Delta_n}_{\text{estimation error}},
\end{equation}
where:
\begin{itemize}
\item $\mathcal{R}_{\bar{\bm{\theta}}}(h)$ is the risk at the nominal (mean) degradation parameters;
\item The fleet variance is
\begin{equation}
\label{eq:fleet_var}
\sigma_{\mathrm{fleet}}^2 = \mathbb{E}_\pi\!\left[\frac{1}{T}\int_0^T \big(R(t; \bm{\theta}) - R(t; \bar{\bm{\theta}})\big)^2 dt\right];
\end{equation}
\item $\Delta_n = O(p/n)$ is the estimation error that vanishes with more data.
\end{itemize}
For the specific degradation classes:
\begin{enumerate}
\item[(i)] Exponential with $\alpha \sim \mathcal{N}(\bar{\alpha}, \sigma_\alpha^2)$:
\begin{equation}
\label{eq:fleet_exp}
\sigma_{\mathrm{fleet}}^2 = \frac{\sigma_\alpha^2}{\bar{\alpha}^2} \cdot \frac{\bar{T}^{\,2}}{3}\left(1 + O(\sigma_\alpha^2/\bar{\alpha}^2)\right),
\end{equation}
where $\bar{T} = \mathbb{E}_\pi[T]$ is the mean failure time across the fleet.
\item[(ii)] Power-law with $\beta \sim \mathcal{N}(\bar{\beta}, \sigma_\beta^2)$:
\begin{equation}
\label{eq:fleet_pow}
\sigma_{\mathrm{fleet}}^2 = \sigma_\beta^2 \cdot \frac{\bar{T}^{\,2} (\ln \bar{T})^2}{3\bar{\beta}^2}\left(1 + O(\sigma_\beta^2/\bar{\beta}^2)\right).
\end{equation}
Note that $\ln \bar{T}$ depends on the choice of time units, since it arises from $\partial T/\partial \beta = -T\ln T/\beta$ in the unnormalized power-law $X(t) = a_0 t^\beta$; in practice one should use $\sigma_{\mathrm{fleet}} \approx (\hat{\sigma}_T/\bar{T})\cdot\bar{T}/\sqrt{3}$ directly from the observed failure time distribution (where $\hat{\sigma}_T$ is the sample standard deviation of failure times), which is unit-invariant.
\end{enumerate}
Moreover, higher fleet variability ($\mathrm{tr}(\bm{\Sigma}_\pi)$ large) reduces the effective sample complexity for learning the nominal behavior by a factor:
\begin{equation}
\label{eq:diversity_benefit}
\frac{n^*_{\mathrm{diverse}}}{n^*_{\mathrm{uniform}}} \approx 1 - \frac{\mathrm{tr}(\bm{\Sigma}_\pi)\,\lambda_{\min}(\mathbf{H})}{p},
\end{equation}
where $\mathbf{H} = \nabla^2_{\bm{\theta}} \mathcal{R}(h^*)\big|_{\bm{\theta}=\bar{\bm{\theta}}}$ is the Hessian of the population risk at the optimal predictor evaluated at the fleet mean, and $\lambda_{\min}$ is its smallest eigenvalue, provided $\mathrm{tr}(\bm{\Sigma}_\pi) \lambda_{\min}(\mathbf{H}) < p$.
\end{theorem}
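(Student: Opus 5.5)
The decomposition \eqref{eq:fleet_decomp} will come from a standard add-and-subtract argument inside the expectation over $\pi$. Write $h(X(t),t)-R(t;\bm{\theta}) = \bigl(h(X(t),t)-R(t;\bar{\bm{\theta}})\bigr) + \bigl(R(t;\bar{\bm{\theta}})-R(t;\bm{\theta})\bigr)$, square, and take $\mathbb{E}_\pi[T^{-1}\int_0^T\cdot\,dt]$. The first square gives the nominal risk $\mathcal{R}_{\bar{\bm{\theta}}}(h)$. The second square gives exactly $\sigma^2_{\mathrm{fleet}}$ as defined in \eqref{eq:fleet_var}. The cross term, together with the gap between the trained $h$ and its population-optimal counterpart, is absorbed into $\Delta_n$.

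To control $\Delta_n$, I would use the following plan.
\begin{itemize}
\item First observe that the cross term vanishes in expectation whenever $h$ depends on $\bm{\theta}$ only through the observation. This holds to first order in $\bm{\theta}-\bar{\bm{\theta}}$, because $\mathbb{E}_\pi[\bm{\theta}-\bar{\bm{\theta}}]=0$.
\item The remainder is the excess risk of the empirical minimiser, which Theorem~\ref{thm:sample_complexity} bounds only at rate $\sqrt{p/n}$.
\item To obtain the stated $O(p/n)$, invoke a localized (Bernstein-type) argument: the variance of the excess loss is bounded by $B^2$ times its mean, so the fixed-point of the local Rademacher complexity gives $O(p\log n/n)$.
\end{itemize}
I regard the $O(p/n)$ claim as the main obstacle. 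Without a Bernstein condition, only the slow rate from Theorem~\ref{thm:sample_complexity} is available, so I would state the condition explicitly, anticipating Theorem~\ref{thm:fast_rates}.

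For the closed forms, I would linearize $R(t;\bm{\theta})-R(t;\bar{\bm{\theta}})$ in the parameter. Since $R(t)=T-t$, this difference is essentially $T(\bm{\theta})-T(\bar{\bm{\theta}}) \approx T'(\bar\theta)(\theta-\bar\theta)$. The fraction of the horizon where both are uncapped gives a $1/3$ factor after integrating $\tau^2$-type weights, or by using $R=T(1-\tau)$ and $\int_0^1(1-\tau)^2d\tau=1/3$.
\begin{itemize}
\item \textbf{Exponential:} take the rate scaling $T\propto 1/\alpha$, so $\partial T/\partial\alpha=-T/\alpha$. This gives $\sigma_\alpha^2\bar T^2/(3\bar\alpha^2)$.
\item \textbf{Power-law:} use the unnormalized $a_0t^\beta$ with $\partial T/\partial\beta=-T\ln T/\beta$. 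This gives \eqref{eq:fleet_pow}.
\end{itemize}
In both cases the second-order Taylor remainder, using Gaussian fourth moments, yields the $O(\sigma^2/\bar\theta^2)$ relative correction. Assumption~\ref{ass:lipschitz} and the bounds $T_{\min},T_{\max}$ justify the interchange of expectation and expansion.

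For \eqref{eq:diversity_benefit}, I would do a second-order expansion of $\mathcal{R}(h^*)$ around $\bar{\bm{\theta}}$. Diversity adds curvature $\tfrac12\mathrm{tr}(\mathbf{H}\bm{\Sigma}_\pi)\ge \tfrac12\lambda_{\min}(\mathbf{H})\mathrm{tr}(\bm{\Sigma}_\pi)$ to the identification problem. This effectively removes that many directions of the $p$-dimensional effective complexity. Replacing $p$ by $p-\mathrm{tr}(\bm{\Sigma}_\pi)\lambda_{\min}(\mathbf{H})$ in the sample-complexity formula of Theorem~\ref{thm:sample_complexity} gives the ratio. The positivity proviso is needed so that the reduced effective dimension stays positive. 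This step is heuristic, and I would present it as an approximation rather than a sharp bound.
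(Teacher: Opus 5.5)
\textbf{There is a genuine gap: the cross term does not vanish for predictors that depend on the observation.} Write $\bm{\delta}=\bm{\theta}-\bar{\bm{\theta}}$, and let $\nabla_{\bm{\theta}}h$ denote the derivative of $h(X(t),t)$ with respect to $\bm{\theta}$ through the observation, at a fixed noise path. Work in normalized time, averaging over time and noise.

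You argue the cross term vanishes to first order. But $\sigma^2_{\mathrm{fleet}}\approx\nabla_{\bm{\theta}}R^{\top}\bm{\Sigma}_\pi\nabla_{\bm{\theta}}R$ is itself second order, so first-order cancellation proves nothing. At second order the cross term contributes $-2\,\mathbb{E}[\nabla_{\bm{\theta}}h^{\top}\bm{\Sigma}_\pi\nabla_{\bm{\theta}}R]$. Your first square is not $\mathcal{R}_{\bar{\bm{\theta}}}(h)$ either, because $X$ is still generated under $\bm{\theta}$. It differs from the nominal risk by $\mathbb{E}[\nabla_{\bm{\theta}}h^{\top}\bm{\Sigma}_\pi\nabla_{\bm{\theta}}h]$ plus curvature terms. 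Collecting terms,
\[
\mathcal{R}(h)\approx\mathcal{R}_{\bar{\bm{\theta}}}(h)+\mathbb{E}\bigl[(\nabla_{\bm{\theta}}h-\nabla_{\bm{\theta}}R)^{\top}\bm{\Sigma}_\pi(\nabla_{\bm{\theta}}h-\nabla_{\bm{\theta}}R)\bigr]+\text{(curvature terms)},
\]
which reduces to $\mathcal{R}_{\bar{\bm{\theta}}}(h)+\sigma^2_{\mathrm{fleet}}$ only when $\nabla_{\bm{\theta}}h=0$.

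A trained predictor that infers unit-specific parameters from $X$ has $\nabla_{\bm{\theta}}h\approx\nabla_{\bm{\theta}}R$. The fleet term then nearly cancels, and your remainder is roughly $-\sigma^2_{\mathrm{fleet}}$, independent of $n$. The paper concedes exactly this in its battery validation. Your next step, identifying the remainder with the excess risk of the empirical minimiser, is a category error. Excess risk compares two predictors under one risk functional. The quantity $\mathcal{R}(h)-\mathcal{R}_{\bar{\bm{\theta}}}(h)-\sigma^2_{\mathrm{fleet}}$ compares two functionals on one predictor. No slow-rate or localized Bernstein argument can make it $O(p/n)$.

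\textbf{The missing idea is a restriction to non-adaptive predictors.} The paper's route is to condition on $\bm{\theta}$ and apply the law of total variance, which implicitly assumes $h$ is non-adaptive. For predictors whose output at a fixed normalized time is independent of $\bm{\theta}$, centring at the fleet mean $\mathbb{E}_\pi[R(\tau;\bm{\theta})]$ makes the cross term vanish \emph{exactly}. The variance term then equals $\sigma^2_{\mathrm{fleet}}$ up to delta-method corrections, and all $n$-dependence sits in the bias term. You need to state this restriction explicitly. Centring at $R(\tau;\bar{\bm{\theta}})$, as you do, still leaves a second-order term proportional to $(h-R(\tau;\bar{\bm{\theta}}))\,\mathrm{tr}(\nabla^2_{\bm{\theta}}R\,\bm{\Sigma}_\pi)$, even for non-adaptive $h$.

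\textbf{Smaller points.}
\begin{enumerate}
\item Your closed forms follow the paper's Appendix, but drop the ``fraction of the horizon where both are uncapped'' justification. In absolute time, $R(t;\bm{\theta})-R(t;\bar{\bm{\theta}})=T(\bm{\theta})-T(\bar{\bm{\theta}})$ is constant wherever both are uncapped. The factor $1/3$ comes only from the normalized-time comparison $R=T(1-\tau)$ with $\int_0^1(1-\tau)^2d\tau=1/3$, not literally from \eqref{eq:fleet_var} as written in absolute time.
\item The claim that the variance of the excess loss is at most a constant times $B^2$ times its mean holds for squared loss only when $\mathcal{H}$ is convex or the problem is realizable. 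For neural-network classes it must be assumed, as you suspect. The paper gives no argument for $\Delta_n=O(p/n)$ at all.
\item Your diversity step is as heuristic as the paper's Hessian-conditioning sketch. Note also that $\mathrm{tr}(\bm{\Sigma}_\pi)\lambda_{\min}(\mathbf{H})$ carries units of risk, so subtracting it from the dimensionless $p$ needs a normalization to mean anything.
\end{enumerate}
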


\begin{proof}
The decomposition~\eqref{eq:fleet_decomp} follows from the law of total variance applied to the conditional risk $\mathcal{R}(h | \bm{\theta})$. The fleet variance~\eqref{eq:fleet_var} is $\mathrm{Var}_\pi[\mathcal{R}(h|\bm{\theta})]$ to second order. For exponential degradation, $R(t; \alpha) = T(\alpha) - t$ where $T(\alpha)$ is the failure time, and $\partial T/\partial \alpha = -T/\alpha + O(1)$, giving~\eqref{eq:fleet_exp} by the delta method (see Appendix~\ref{app:fleet} for the full derivation). The diversity benefit~\eqref{eq:diversity_benefit} follows from the local asymptotic normality of the empirical risk minimizer: diverse training data improve the condition number of the empirical Hessian, reducing the variance of the parameter estimate~\cite{vapnik1998statistical}.
\end{proof}

\begin{remark}
Equation~\eqref{eq:fleet_decomp} reveals a bias--variance tradeoff specific to fleet prognostics. A homogeneous fleet ($\sigma_\alpha \approx 0$) has low irreducible error but poor generalization to units at the extremes of the operating envelope. A diverse fleet has higher irreducible error ($\sigma_\mathrm{fleet}^2 > 0$) but better estimation of the shared degradation structure.
\end{remark}

\subsection{Learning From Censored Data}

In practice, many units are removed from service before failure (preventive replacement, operational needs), yielding right-censored observations. Let $c \in [0,1]$ denote the observation fraction: each unit is observed until time $cT_i$ rather than $T_i$.

\begin{theorem}[Censoring Impact on Sample Efficiency]
\label{thm:censoring}
Let $\mathcal{S}_n^{(c)}$ denote a dataset of $n$ trajectories each observed until a common fraction $c \in [0,1]$ of its lifetime ($c$ identical across units). The effective sample size for \emph{degradation state estimation} satisfies:
\begin{equation}
\label{eq:censor_eff}
n_{\mathrm{eff}}^{(c)} = n \cdot \eta(c, \mathcal{D}),
\end{equation}
where the efficiency factor $\eta(c, \mathcal{D})$ depends on the degradation class:
\begin{enumerate}
\item[(i)] Exponential: $\eta_{\mathrm{exp}}(c) = \bigl(1 - e^{-2\bar{\alpha} c}\bigr)\big/\bigl(1 - e^{-2\bar{\alpha}}\bigr)$, which is the fraction of cumulative SNR contained in $[0, cT]$.
\item[(ii)] Power-law: $\eta_{\mathrm{pow}}(c) = c^{2\bar{\beta}-1}$.
\item[(iii)] Stretched-exponential: $\eta_{\mathrm{str}}(c) = \int_0^c \tau^{2(b-1)} e^{-2\bar{\alpha}_b \tau^b}\,d\tau \big/ \int_0^1 \tau^{2(b-1)} e^{-2\bar{\alpha}_b \tau^b}\,d\tau$ (one-dimensional quadrature; no elementary closed form for general $b$).
\item[(iv)] Linear ($\bar{\alpha} \to 0$ or $\bar{\beta} = 1$): $\eta_{\mathrm{lin}}(c) = c$.
\end{enumerate}
Consequently, the sample complexity from censored data satisfies:
\begin{equation}
\label{eq:censor_complexity}
n^{*(c)} \geq \frac{n^*}{\eta(c, \mathcal{D})}.
\end{equation}
In particular, for power-law degradation with $\bar{\beta} = 3$ (bearing-like), observing only 50\% of life retains $\eta = 0.50^5 = 0.031$, requiring $32\times$ more censored trajectories to match complete data. When units have heterogeneous observation fractions $c_i$, the fleet efficiency is $\bar{\eta} = n^{-1}\sum_{i=1}^n \eta(c_i, \mathcal{D})$.
\end{theorem}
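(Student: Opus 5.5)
The plan is to identify the efficiency factor $\eta(c,\mathcal{D})$ with a ratio of Fisher informations. Observing the path of \eqref{eq:degradation} on $[0,cT]$ carries a fraction $\eta$ of the information carried by the full path on $[0,T]$. The estimand is the degradation state, which I model as a scalar amplitude $a$ multiplying the signal shape, $X(t)=a\,D(t/T;\bm{\theta})+\sigma W(t)$, evaluated locally at $a=1$. This is the reading under which the score direction is the degradation rate $dD/d\tau$ itself, which is what the stated formulas require. Making this modelling choice explicit is necessary: if the estimand were $\bm{\theta}$, the information would involve $\partial_{\bm{\theta}}\dot D$ rather than $\dot D$, and the closed forms in (i)--(iii) would not come out.

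\emph{Step 1 (path information via Girsanov).} For a drift observed in Wiener noise on $[0,cT]$, the Girsanov likelihood ratio gives the Fisher information of the path as the integrated squared score-drift divided by $\sigma^2$. Substituting $\tau=t/T$ and using $dD/dt = T^{-1}\,dD/d\tau$, this becomes $I(c)=\sigma^{-2}T^{-1}\int_0^c \dot D(\tau)^2\,d\tau$, which is the cumulative SNR \eqref{eq:snr} truncated at $c$, up to a factor depending only on $T$. That factor cancels in
\[
\eta(c,\mathcal{D})=\frac{I(c)}{I(1)}=\frac{\int_0^c \dot D(\tau)^2\,d\tau}{\int_0^1 \dot D(\tau)^2\,d\tau}.
\]
I will evaluate this at the fleet mean $\bar{\bm{\theta}}$, to first order.

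\emph{Step 2 (the four classes).} Each case is a direct calculation.
\begin{itemize}
\item Exponential: $\dot D\propto e^{-\bar\alpha\tau}$, so integrating $e^{-2\bar\alpha\tau}$ gives (i); the normalising constant $1-e^{-\bar\alpha}$ cancels in the ratio.
\item Power-law: $\dot D=\bar\beta\tau^{\bar\beta-1}$, so $\int_0^c\tau^{2\bar\beta-2}\,d\tau\propto c^{2\bar\beta-1}$, which is (ii).
\item Stretched-exponential: $\dot D\propto b\,\tau^{b-1}e^{-\bar\alpha_b\tau^b}$, whose square gives the integrand in (iii); I will simply record it as a quadrature.
\item Linear: $\dot D$ is constant, giving (iv). The same value is the $\bar\alpha\to 0$ limit of (i) by l'H\^opital.
\end{itemize}

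\emph{Step 3 (additivity and the sample-size bound).} Trajectories are independent (Step~2 of Theorem~\ref{thm:sample_complexity}), so Fisher information is additive across units. A censored sample of size $m$ therefore carries $m\,\eta\,I(1)$, which equals the information of $n_{\mathrm{eff}}=m\eta$ complete trajectories; this gives \eqref{eq:censor_eff}. With heterogeneous fractions $c_i$, the total is $I(1)\sum_i\eta(c_i)$, which gives $\bar\eta$. For \eqref{eq:censor_complexity}, I will use the Cram\'er--Rao bound (or its local asymptotic minimax version, so that bias is handled): any estimator built from $m$ censored trajectories has state-estimation risk at least $1/(m\eta I(1))$. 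Matching the accuracy attainable with $n^*$ complete trajectories therefore forces $m\eta\ge n^*$. The $\beta=3$ example is then arithmetic: $\eta=0.5^5=1/32$.

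\emph{Main obstacle.} The hard step is the comparison in Step~3. $n^*$ was defined through a uniform-deviation upper bound, not through an information-limited risk. The inequality can therefore only be claimed in the sense that the information-theoretic requirement scales by $1/\eta$. To make it rigorous, I would first restrict to the one-dimensional local submodel in $a$. I would then invoke the minimax lower bound of Theorem~\ref{thm:minimax}, applied with the Fisher information rescaled by $\eta$, so that both sides of the comparison are minimax rates.
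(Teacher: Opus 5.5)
Your proposal is correct and follows essentially the paper's own argument. The paper also defines $\eta$ as the ratio of cumulative SNR $\int_0^c (dD/d\tau)^2\,d\tau \big/ \int_0^1 (dD/d\tau)^2\,d\tau$, evaluates it class by class at the fleet-mean parameters, and gets $n_{\mathrm{eff}} = n\eta$ by asserting that state-estimation error scales inversely with accumulated SNR; your Girsanov computation for an explicit amplitude parameter and your additivity/Cram\'er--Rao step make that assertion precise, and you work out cases (iii)--(iv), which the paper's proof does not. The step you flag as the main obstacle is exactly where the paper only asserts that RUL generalization error is bounded below by state-estimation error to conclude $n^{*(c)} \geq n^*/\eta$, so your treatment is no weaker than the published one; one caveat inherited from the statement is that for $b \le 1/2$ the quadrature in (iii) diverges at $\tau = 0$, so the finite-energy (Cameron--Martin) condition behind your Girsanov step fails and $\eta_{\mathrm{str}}$ is only meaningful for $b > 1/2$.
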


\begin{proof}
The time-averaged SNR from observing $[0, cT]$ relative to $[0, T]$ is:
\begin{equation}
\eta(c, \mathcal{D}) = \frac{\int_0^c \mathcal{I}(\tau; \mathcal{D})\,d\tau}{\int_0^1 \mathcal{I}(\tau; \mathcal{D})\,d\tau},
\end{equation}
where $\mathcal{I}(\tau; \mathcal{D}) = (dD/d\tau)^2/\sigma^2$ is the instantaneous SNR density (cf.~\eqref{eq:snr}), measuring the information rate about the degradation state at normalized time $\tau$. For exponential degradation, $\mathcal{I}(\tau) \propto e^{-2\bar{\alpha}\tau}$, giving $\eta = (1-e^{-2\bar{\alpha} c})/(1-e^{-2\bar{\alpha}})$. For power-law, $\mathcal{I}(\tau) \propto \tau^{2(\bar{\beta}-1)}$, giving $\eta = c^{2\bar{\beta}-1}$ (see Appendix~\ref{app:censor} for the complete calculation). The state estimation error scales inversely with the cumulative SNR~\cite{cover2006info}, so $n_{\mathrm{eff}}^{(c)} = n\eta$ for state estimation. Since generalization error for RUL prediction is bounded below by state estimation error, $n^{*(c)} \geq n^*/\eta$.
\end{proof}

\rev{
\begin{proposition}[State Estimation vs.\ Failure-Time Extrapolation]
\label{prop:eta_T}
Let $\eta(c, \mathcal{D})$ denote the state-estimation censoring efficiency (Theorem~\ref{thm:censoring}) and let $\eta_T(c, \mathcal{D})$ denote the \emph{failure-time extrapolation} efficiency, defined by replacing the SNR density $\mathcal{I}(\tau)$ with the Fisher information density for the failure time, $\mathcal{I}_T(\tau) = \tau^2 (dD/d\tau)^2/\sigma^2$.  Then:
\begin{enumerate}
\item[(i)] $\eta_T(c, \mathcal{D}) \leq \eta(c, \mathcal{D})$ for all $c \in [0,1]$ and all degradation classes, with equality at $c = 1$.
\item[(ii)] For power-law degradation: $\eta_T(c) = c^{2\bar{\beta}+1}$, compared with $\eta(c) = c^{2\bar{\beta}-1}$.
\item[(iii)] For exponential degradation: $\eta_T(c) = \bigl(\int_0^c \tau^2 \bar{\alpha}^2 e^{-2\bar{\alpha}\tau}\,d\tau\bigr)\big/\bigl(\int_0^1 \tau^2 \bar{\alpha}^2 e^{-2\bar{\alpha}\tau}\,d\tau\bigr)$.
\item[(iv)] For strongly convex degradation ($\bar{\beta} \gg 1$ or $\bar{\alpha} \gg 1$), both $\mathcal{I}$ and $\mathcal{I}_T$ concentrate near $\tau = 1$, so $\eta_T(c)/\eta(c) \to 1$ as the curvature increases.
\end{enumerate}
\end{proposition}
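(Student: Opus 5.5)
The plan is to work throughout with the common ratio structure shared by both efficiencies. Write $f(\tau)=\mathcal{I}(\tau;\mathcal{D})=(dD/d\tau)^2/\sigma^2\ge 0$ for the SNR density used in the proof of Theorem~\ref{thm:censoring}. Then $\eta(c)=F(c)/F(1)$ with $F(c)=\int_0^c f$, and by definition $\eta_T(c)=\int_0^c\tau^2 f\big/\int_0^1\tau^2 f$. The factor $\sigma^{-2}$, and any constant prefactor of $dD/d\tau$, cancels in both ratios. This immediately makes the equality $\eta_T(1)=\eta(1)=1$ in (i) trivial.

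For the inequality in (i), I would use a monotone-weight (first-order stochastic dominance) argument rather than anything class-specific. Fix $c\in(0,1)$ and set $N=\int_0^c\tau^2 f$ and $M=\int_c^1\tau^2 f$. Because $\tau^2\le c^2$ on $[0,c]$ and $\tau^2\ge c^2$ on $[c,1]$, we have
\[
N\le c^2F(c), \qquad M\ge c^2\bigl(F(1)-F(c)\bigr).
\]
Hence
\[
\eta_T(c)=\frac{N}{N+M}\le\frac{N}{N+c^2(F(1)-F(c))}.
\]
The right-hand side is increasing in $N$, so substituting $N\le c^2F(c)$ gives $\eta_T(c)\le F(c)/F(1)=\eta(c)$. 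The case $c=0$ is trivial, and the degenerate case $f\equiv 0$ on $[0,c]$ gives $0\le 0$. The argument uses only $f\ge 0$ and the monotonicity of $\tau^2$, so it holds for every class, including the stretched-exponential one.

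Parts (ii) and (iii) are direct substitutions.

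\emph{Power law.} Here $dD/d\tau=\bar\beta\tau^{\bar\beta-1}$, so $\tau^2(dD/d\tau)^2=\bar\beta^2\tau^{2\bar\beta}$. Its antiderivative is proportional to $\tau^{2\bar\beta+1}$, which gives $\eta_T=c^{2\bar\beta+1}$ directly.

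\emph{Exponential.} Here $dD/d\tau=\bar\alpha e^{-\bar\alpha\tau}/(1-e^{-\bar\alpha})$. The normalising denominator cancels, leaving exactly the displayed ratio with integrand $\tau^2\bar\alpha^2e^{-2\bar\alpha\tau}$. If an explicit form is wanted, integrating by parts twice yields an elementary expression of the form $1-e^{-2\bar\alpha c}(1+2\bar\alpha c+2\bar\alpha^2c^2)$ up to normalisation.

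Part (iv) is the step I expect to be the main obstacle, because it needs a precise reading before it can be proved. For the power law the exact ratio is $\eta_T/\eta=c^2$, independent of $\bar\beta$. The claim therefore cannot mean pointwise convergence of the ratio to $1$; I would prove it in the logarithmic sense,
\[
\frac{\log\eta_T(c)}{\log\eta(c)}=\frac{2\bar\beta+1}{2\bar\beta-1}\to 1 .
\]
This says the two efficiencies decay at asymptotically the same rate, which is the sense in which $\tau^2$ becomes negligible as the mass of $f$ concentrates at $\tau=1$. For the exponential class the mass of $f\propto e^{-2\bar\alpha\tau}$ actually concentrates near $\tau=0$, not $\tau=1$. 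Nevertheless, for fixed $c\in(0,1]$ the closed forms above show $\eta(c)\to 1$ and $\eta_T(c)\to 1$ as $\bar\alpha\to\infty$: the tail $\int_c^1$ is $O(e^{-2\bar\alpha c})$ relative to the total in both cases, so the ratio tends to $1$ literally. I would state both readings explicitly and prove each with these elementary estimates, flagging that the concentration location differs between the two classes.
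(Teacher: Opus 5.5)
Your treatment of (i)--(iii) follows the paper. The paper proves (i) by normalising $\mathcal{I}$ and $\tau^2\mathcal{I}$ into densities $p$ and $q$. It observes that $q/p\propto\tau^2$ is increasing (monotone likelihood ratio) and concludes that $q$ stochastically dominates $p$. Your two-sided bound $N\le c^2F(c)$, $M\ge c^2(F(1)-F(c))$ is a self-contained proof of that dominance step. Parts (ii)--(iii) are the same direct substitutions, and your closed form for the exponential numerator, normalised by its value at $c=1$, is correct.

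On (iv) you are right and the paper is not. Write $\eta_T(c)/\eta(c)=\mathbb{E}_p[\tau^2\mid\tau\le c]\big/\mathbb{E}_p[\tau^2]$. The paper's proof argues only that, as curvature grows, the mass of $p$ moves to $\tau\approx 1$ where $\tau^2\approx 1$, which sends the denominator to $1$. But the numerator is always at most $c^2$. So the very concentration the paper invokes forces $\limsup \eta_T(c)/\eta(c)\le c^2<1$ for every fixed $c<1$, which is the opposite of the stated conclusion. For the power law the ratio is exactly $c^2$ for every $\bar\beta$, as part (ii) itself shows, so the literal claim is false and any proof must restate it, as you do.

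Your logarithmic reading is a valid repair. An equivalent and perhaps more natural one holds the state-estimation efficiency fixed at $\eta(c)=\eta_0\in(0,1)$. Then $c=\eta_0^{1/(2\bar\beta-1)}\to 1$, so $\eta_T/\eta=c^2\to 1$.

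You are also right that for the exponential class with $\bar\alpha>0$ the density $e^{-2\bar\alpha\tau}$ is decreasing. The paper's premise that $\mathcal{I}$ increases (``strongly convex'') therefore does not apply. The ratio tends to $1$ only because both efficiencies saturate at fixed $c$.

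One small correction there: $1-\eta_T(c)\sim e^{-2\bar\alpha c}\bigl(1+2\bar\alpha c+2\bar\alpha^2c^2\bigr)$. This is $O(\bar\alpha^2e^{-2\bar\alpha c})$ rather than $O(e^{-2\bar\alpha c})$; the conclusion is unaffected.
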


\begin{proof}
The Fisher information for the failure time $T$ satisfies $\mathcal{I}_T(\tau) = \tau^2 \mathcal{I}(\tau)$, since $\partial D/\partial T = -(\tau/T)\,dD/d\tau$.  To show~(i), define the probability densities $p(\tau) = \mathcal{I}(\tau)/\!\int_0^1 \mathcal{I}\,d\tau$ and $q(\tau) = \tau^2 \mathcal{I}(\tau)/\!\int_0^1 \tau^2 \mathcal{I}\,d\tau$, so that $\eta(c) = \int_0^c p\,d\tau$ and $\eta_T(c) = \int_0^c q\,d\tau$.  Since $q(\tau)/p(\tau) = \tau^2 \cdot \mathrm{const}$ is strictly increasing in $\tau$, the density $q$ is stochastically larger than $p$ (monotone likelihood ratio), whence $\eta_T(c) = \int_0^c q\,d\tau \leq \int_0^c p\,d\tau = \eta(c)$ for all $c \in [0,1)$, with equality at $c = 1$.  For power-law, $\mathcal{I}_T(\tau) \propto \tau^{2\bar{\beta}}$, so $\eta_T(c) = c^{2\bar{\beta}+1}/(2\bar{\beta}+1) \cdot (2\bar{\beta}+1) = c^{2\bar{\beta}+1}$.  The convergence statement~(iv) follows because for convex degradation both $\mathcal{I}(\tau)$ and $\tau^2$ are increasing in $\tau$; as the curvature grows, the mass of both integrands concentrates near $\tau = 1$ where $\tau^2 \approx 1$, making the $\tau^2$ penalty negligible.
\end{proof}

\begin{remark}
\label{rem:censoring_interpretation}
Proposition~\ref{prop:eta_T} establishes that $\eta(c, \mathcal{D})$ in Theorem~\ref{thm:censoring} provides an \emph{upper bound} on the censoring efficiency for RUL prediction.  The gap between $\eta$ and $\eta_T$ is small for convex degradation (power-law with $\beta > 1$, where both signal and extrapolation information concentrate near end-of-life) but can be substantial for concave degradation ($b < 1$), where state estimation signal concentrates early while extrapolation uncertainty remains high (see battery validation in Section~\ref{sec:validation}).
\end{remark}
}

Fig.~\ref{fig:censoring} illustrates the efficiency factor across degradation classes and observation fractions.

\begin{figure}[!t]
\centering
\begin{tikzpicture}
\begin{axis}[
    name=fig2axis,
    width=0.95\columnwidth, height=6.2cm,
    xlabel={Observation fraction $c$},
    ylabel={Censoring efficiency $\eta(c)$},
    xmin=0, xmax=1.05, ymin=-0.02, ymax=1.08,
    xtick={0, 0.2, 0.4, 0.6, 0.8, 1.0},
    ytick={0, 0.2, 0.4, 0.6, 0.8, 1.0},
    grid=both, grid style={gray!15, thin},
    tick label style={font=\footnotesize},
    label style={font=\small},
    legend to name=fig2legend,
    legend style={
        font=\scriptsize,
        draw=none,
        fill=none,
        row sep=1pt,
        legend columns=2,
        /tikz/every even column/.append style={column sep=8pt},
    },
    every axis plot/.append style={line width=1.3pt},
]
\addplot[gray!50, dashed, thin, domain=0:1, samples=100] {x};
\addlegendentry{Linear ($\beta\!=\!1$)}
\addplot[colExp, solid, domain=0:1, samples=200,
    mark=o, mark repeat=25, mark size=1.8pt, mark options={solid, fill=white, colExp}]
    {(1-exp(-5*x))/(1-exp(-5))};
\addlegendentry{Exp.\ ($\alpha\!=\!2.5$)}
\addplot[colPow!70, densely dashed, domain=0:1, samples=200,
    mark=square*, mark repeat=25, mark size=1.8pt, mark options={solid, colPow!70}]
    {x^3};
\addlegendentry{Power-law ($\beta\!=\!2$)}
\addplot[colPow, densely dashdotted, thick, domain=0:1, samples=200,
    mark=triangle*, mark repeat=25, mark size=2pt, mark options={solid, colPow}]
    {x^5};
\addlegendentry{Power-law ($\beta\!=\!3$)}
\addplot[only marks, mark=triangle*, mark size=2.5pt, colPow, forget plot]
    coordinates {(0.6, 0.078)};
\addplot[only marks, mark=square*, mark size=2pt, colPow!70, forget plot]
    coordinates {(0.6, 0.216)};
\addplot[only marks, mark=o, mark size=2pt, colExp, forget plot]
    coordinates {(0.6, 0.957)};
\addplot[only marks, mark=*, mark size=2pt, gray!60, forget plot]
    coordinates {(0.6, 0.6)};
\draw[gray!40, thin, densely dotted] (axis cs:0.6,0) -- (axis cs:0.6,0.957);
\node[font=\scriptsize, colPow, anchor=west] at (axis cs:0.64,0.078) {$13\!\times$};
\node[font=\scriptsize, colPow!70, anchor=west] at (axis cs:0.64,0.216) {$5\!\times$};
\node[font=\scriptsize, colExp, anchor=south west] at (axis cs:0.64,0.94) {$1\!\times$};
\node[font=\scriptsize, gray!60, anchor=north] at (axis cs:0.6,-0.04) {$c\!=\!0.6$};
\end{axis}
\node[anchor=north] at ($(fig2axis.south)+(0,-1.1cm)$) {\ref{fig2legend}};
\end{tikzpicture}
\caption{Censoring efficiency $\eta(c)$: fraction of cumulative SNR retained when observing only the first $c$ of life. Dots and multipliers at $c\!=\!0.6$ show the data penalty: power-law ($\beta\!=\!3$) requires $13\!\times$ more censored trajectories to match complete data, while exponential retains 96\%.}
\label{fig:censoring}
\end{figure}

\subsection{Minimax Lower Bound}
\label{sec:minimax}

Theorem~\ref{thm:sample_complexity} provides an upper bound on sample complexity. A natural question is whether this bound can be improved. We show that the $\Omega(p/\varepsilon^2)$ scaling is unavoidable.

\begin{theorem}[Minimax Lower Bound]
\label{thm:minimax}
Let $\mathcal{H}$ be a hypothesis class with pseudo-dimension $p \geq 2$, and let $\varTheta$ denote the support of the fleet distribution $\pi$. For any learning algorithm $\hat{h}_n$ that maps $n$ CDTs to a predictor:
\begin{equation}
\label{eq:minimax_lower}
\inf_{\hat{h}_n} \sup_{\bm{\theta} \in \varTheta} \mathbb{E}\!\left[\mathcal{R}(\hat{h}_n)\right] \geq \frac{c_1 \sigma^2 p}{n},
\end{equation}
where $c_1 > 0$ is a constant depending on $T_{\min}/T_{\max}$. Combined with the fast-rate upper bound from Theorem~\ref{thm:fast_rates} (setting $\kappa = 1$, which holds in the realizable setting), the minimax optimal risk $\mathcal{R}^*_n = \inf_{\hat{h}_n}\sup_{\bm{\theta}\in\varTheta}\mathbb{E}[\mathcal{R}(\hat{h}_n)]$ satisfies:
\begin{equation}
\label{eq:minimax_rate}
\mathcal{R}^*_n = \Theta\!\left(\frac{\sigma^2 p}{n}\right),
\end{equation}
up to logarithmic factors, establishing that the upper bound is tight.
\end{theorem}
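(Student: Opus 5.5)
The plan is the standard Fano/Assouad reduction. First build a finite family of hard fleets indexed by the hypercube, then relate the risk to a Hamming-type separation, and finally control the information each CDT carries about the index through the Wiener-noise channel. Since the risk is an MSE, I will work with the excess risk $\mathcal{R}(\hat h_n)-\inf_{h}\mathcal{R}(h)$. This is the quantity the lower bound controls in the realizable case, where the infimum is zero because the capped RUL is a deterministic functional of the trajectory.

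\emph{Step 1 (Packing).} Since $\mathrm{Pdim}(\mathcal{H})=p$, there are points $z_1,\dots,z_p$ and witnesses $s_1,\dots,s_p$ that are pseudo-shattered. By Varshamov--Gilbert, pick $M\ge 2^{p/8}$ sign vectors $v\in\{\pm1\}^p$ with pairwise Hamming distance at least $p/8$. To each $v$ associate a parameter $\bm{\theta}_v\in\varTheta$, i.e.\ a fleet whose trajectories visit the $p$ ``cells'' around the $z_j$. On cell $j$ the target RUL is shifted by $v_j\gamma$, with $\gamma>0$ to be tuned. Bounded failure times, $T_{\min}\le T\le T_{\max}$, guarantee that each cell carries time-weight at least of order $(T_{\min}/T_{\max})/p$ in the normalised integral \eqref{eq:pop_risk}. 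Hence any predictor that is wrong in sign on cell $j$ pays excess risk $\gtrsim \gamma^2 (T_{\min}/T_{\max})/p$. This makes the family $2\delta$-separated in excess risk, with $\delta\asymp \gamma^2 (T_{\min}/T_{\max})$.

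\emph{Step 2 (Information bound).} An observed CDT under $\bm{\theta}_v$ is a drift $D(\cdot;\bm{\theta}_v)$ plus $\sigma W$. By Girsanov, the KL divergence between two trajectory laws is $\frac{1}{2\sigma^2}\int (\partial_t D_v-\partial_t D_{v'})^2dt$. If the drift perturbation on each cell is chosen with squared rate-energy of order $\gamma^2/p$, this gives $\mathrm{KL}(P_v^{\otimes n}\|P_{v'}^{\otimes n})\lesssim n\gamma^2/\sigma^2$. Lipschitz control of $D$ (Assumption~\ref{ass:lipschitz}) keeps $\bm{\theta}_v$ inside the admissible class. Fano's inequality then gives a minimax risk of at least $\delta/2$ whenever $n\gamma^2/\sigma^2\le c\log M\asymp cp$. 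Choosing $\gamma^2\asymp \sigma^2 p/n$ yields \eqref{eq:minimax_lower} with $c_1$ depending on $T_{\min}/T_{\max}$. Assouad's lemma would also work and gives the $p$ factor more directly: the $p$ cells contribute $p$ terms, each of order $\gamma^2/p\cdot(\text{const})$ with $\gamma^2$ per cell $\asymp\sigma^2/n$.

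\emph{Step 3 (Matching upper bound).} Invoke Theorem~\ref{thm:fast_rates} with $\kappa=1$. In the realizable case the Bernstein condition holds with exponent one because the squared loss is bounded by $B^2$ and its variance is controlled by its mean. That theorem then gives $\mathbb{E}\mathcal{R}(\hat h_n)=O(\sigma^2 p\log n/n)$ for ERM, and combining it with Step 2 yields \eqref{eq:minimax_rate} up to logarithms.

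The main obstacle is Step 1's coupling. In the model the RUL target is determined by $T$, while the information comes only through the drift of $X$. So the construction must perturb the failure-time map, and hence the target, by $\gamma$ on each cell, while perturbing the observed drift by only $O(\gamma)$ in $L^2$ energy. It must also do this inside a genuinely $p$-dimensional family realisable by $\mathcal{H}$. My first attempt would be to take $\varTheta$ rich enough that $\bm{\theta}$ indexes $p$ localized bumps in $D$, with their heights tied linearly to $v_j\gamma$ via the first-passage relation $\partial T/\partial\bm{\theta}$. Then I would check that the implied excess-risk separation and the KL bound carry the same constant $T_{\min}/T_{\max}$.
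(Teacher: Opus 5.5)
You follow the paper's own route. It runs Fano over an exponentially large packing; the paper takes $M=2^{p/2}$ and merely asserts it exists ``by the probabilistic method on the $p$-dimensional parameter space''. It then trades separation against KL, and invokes Theorem~\ref{thm:fast_rates} at $\kappa=1$ for the upper bound. Your calibration $\gamma^2\asymp\sigma^2p/n$ is cleaner than the paper's, whose argument only covers $n\le c_4p$.

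However, the step you flag as the obstacle is a genuine gap, the paper does not fill it either, and Step~2 as written fails. A CDT is $Z_i=(X_i(\cdot),T_i)$, see \eqref{eq:dataset}. So the failure time, and with it the noiseless target $R_i(t)=\min(T_i-t,B)$, is \emph{observed}: the information does not come only through the drift of $X$. Since you carry the shift $v_j\gamma$ through $T$, the laws $P_{\bm{\theta}_v}$ and $P_{\bm{\theta}_{v'}}$ differ in their $T$-marginals, which Girsanov does not see. If $T$ is a deterministic function of $\bm{\theta}$ (as in the paper's model, cf.\ Appendix~\ref{app:fleet}), these marginals are distinct point masses and the KL is infinite.

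The realizable reading makes matters worse. $\inf_h\mathcal{R}(h)=0$ needs $T$ to be a function of the predictor's input $(X(t),t)$, not of the whole CDT. With $\sigma>0$ this essentially forces $T$ to be constant across the fleet, as for the single-$\bm{\theta}$ fleets over which the $\sup$ in \eqref{eq:minimax_lower} ranges. There the Bayes targets $t\mapsto\min(T(\bm{\theta})-t,B)$ form a one-parameter family, leaving no room for $p$ independently shifted cells. Moreover, one CDT reveals $T$, so $\hat h(x,t)=\min(T_1-t,B)$ already has zero risk.

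A repair must choose a setting. For a noise-driven bound you need genuine label noise:
\begin{itemize}
\item use a non-degenerate fleet in which $T$ given $(X(t),t)$ is random, and take the KL over the joint law of $(X(\cdot),T)$ (e.g.\ matched $T$-marginals, with Girsanov applied conditionally on $T$);
\item the noise scale is then $\mathrm{Var}(T\mid X(t),t)$, which for small $\sigma$ is roughly $\sigma^2$ over the squared sensitivity of $X(t)$ to $T$;
\item the problem is no longer realizable, so $\kappa=1$ must be justified otherwise (e.g.\ convexity of $\mathcal{H}$ under squared loss).
\end{itemize}
In the realizable setting the information comes from the labels. The right tool there is the classical rare-cell construction: cells of mass $\asymp1/n$, each unvisited with constant probability. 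This gives $\gamma^2p/n$ with $\gamma$ the shattering margin, i.e.\ at best $B^2p/n$, not $\sigma^2p/n$.

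Three further points need fixing in either setting.
\begin{itemize}
\item Pseudo-shattering gives sign patterns with arbitrarily small margins, so the $\pm\gamma$ shifts need fat-shattering at scale $\gamma$.
\item The infimum runs over all algorithms, so $\mathrm{Pdim}(\mathcal{H})$ can enter only via an explicit assumption that the admissible fleets contain a matching $p$-parameter family. Your ``take $\varTheta$ rich enough'' and the paper's ``$p$-dimensional parameter space'' both assume this silently.
\item Theorem~\ref{thm:fast_rates} gives $C_1(p\log n+\log(1/\delta))/n$ with $C_1$ depending on $C_B$ and $B$, and your own realizable Bernstein argument gives $C_B=B^2$. So $O(\sigma^2p\log n/n)$ does not follow. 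The two sides agree only up to the factor $B^2/\sigma^2$ (of order $10^2$ for the paper's $\sigma/B$ values), not merely up to logarithms.
\end{itemize}
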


\begin{proof}
We apply Fano's method~\cite{yang1999info}. Construct $M = 2^{p/2}$ degradation parameter vectors $\{\bm{\theta}_1, \ldots, \bm{\theta}_j,  \ldots, \bm{\theta}_M\} \subset \varTheta$ such that for all $j, k \in \{1, \ldots, M\}$ with $j \neq k$: (i)~the induced RUL functions are pairwise separated: $\|R(\cdot; \bm{\theta}_j) - R(\cdot; \bm{\theta}_k)\|^2 \geq c_2\sigma^2 p/n$; and (ii)~the Kullback--Leibler (KL) divergence between trajectory distributions is bounded: $D_{\mathrm{KL}}(P_{\bm{\theta}_j}^{\otimes n} \| P_{\bm{\theta}_k}^{\otimes n}) \leq n \cdot c_3/p$, where $P_{\bm{\theta}}$ denotes the distribution of a single CDT under degradation parameter $\bm{\theta}$ and $P_{\bm{\theta}}^{\otimes n}$ is its $n$-fold product. Here $c_2, c_3, c_4 > 0$ are absolute constants that depend only on $T_{\min}/T_{\max}$ and the degradation class (as does $c_1$ in~\eqref{eq:minimax_lower}). Such a packing exists by the probabilistic method on the $p$-dimensional parameter space~\cite{tsybakov2009nonparametric}. By Fano's inequality, any estimator misidentifies the true parameter with probability at least $1 - (n c_3/p + \log 2)/(p \log 2/2)$, which is bounded away from zero when $n \leq c_4 p$. The minimax risk therefore satisfies~\eqref{eq:minimax_lower}. For the matching upper bound, the fast-rate analysis (Theorem~\ref{thm:fast_rates} with $\kappa = 1$, which holds in the realizable setting) gives excess risk $O(p\log n/n)$, establishing the $\Theta(p/n)$ rate up to the logarithmic factor.
\end{proof}

\begin{corollary}[Minimum Data for Any Method]
\label{cor:minimum_data}
No learning algorithm, regardless of computational cost, can achieve population MSE below $\varepsilon$ with fewer than
\begin{equation}
\label{eq:min_data}
n_{\min}(\varepsilon) = \frac{c_1 \sigma^2 p}{\varepsilon}
\end{equation}
CDTs. For the C-MAPSS setting ($\sigma \approx 0.10 B = 12.5$ cycles, $p = 22$ for linear models): $n_{\min} \approx 5$ for $\varepsilon = 400$ (RMSE $= 20$), confirming that the 100 available trajectories are well above the information-theoretic limit.
\end{corollary}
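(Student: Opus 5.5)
The corollary is obtained by inverting the minimax lower bound of Theorem~\ref{thm:minimax}, read as a statement about the least favourable parameter; the numerical C-MAPSS illustration is then a direct substitution.

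\emph{Step 1 (inversion).} Fix an arbitrary learning algorithm $\hat{h}_n$. By~\eqref{eq:minimax_lower},
\[
\sup_{\bm{\theta}\in\varTheta}\mathbb{E}[\mathcal{R}(\hat{h}_n)] \geq \frac{c_1\sigma^2 p}{n}.
\]
Suppose the algorithm guarantees population MSE at most $\varepsilon$ uniformly over the fleet support, i.e.\ $\sup_{\bm{\theta}}\mathbb{E}[\mathcal{R}(\hat{h}_n)]\le\varepsilon$. Chaining the two inequalities gives $c_1\sigma^2p/n\le\varepsilon$, that is, $n\ge c_1\sigma^2p/\varepsilon = n_{\min}(\varepsilon)$. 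Since $\hat{h}_n$ was arbitrary and Theorem~\ref{thm:minimax} places no restriction on its computational cost, the bound holds for any method. The contrapositive is exactly the claim: with $n<n_{\min}(\varepsilon)$ CDTs, some $\bm{\theta}\in\varTheta$ forces expected MSE above $\varepsilon$.

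I would also note what the statement must mean for this argument to go through. "Achieve population MSE below $\varepsilon$" is meant in the worst-case (minimax) sense over $\varTheta$, and in expectation over the draw of the $n$ CDTs. If a high-probability version is wanted, I would use the boundedness $\mathcal{R}\le B^2$. Then $\mathbb{E}[\mathcal{R}]\le\varepsilon+\delta B^2$, so the same inversion holds with $\varepsilon$ replaced by $\varepsilon+\delta B^2$.

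\emph{Step 2 (admissible regime).} Theorem~\ref{thm:minimax} is proved via Fano's method with $n\le c_4p$ and $p\ge2$. So I must check that the inverted threshold lies in the range where the lower bound is valid. I expect this to be the \textbf{main obstacle}, since the Fano argument only certifies the bound for $n$ comparable to $p$.

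The plan is to rescale the packing radius. In the construction, take separation $c_2\sigma^2p/n$ and a KL bound proportional to $n\cdot(\text{radius}^2/\sigma^2)$, so that Fano's bound stays bounded away from zero for every $n$. This yields~\eqref{eq:minimax_lower} for all $n$, which is the standard local-packing argument of Tsybakov. If this rescaling is unavailable, I would state the corollary for $\varepsilon\ge c_1\sigma^2/c_4$. That condition corresponds to $n_{\min}\le c_4p$, the regime in which the lower bound is certified.

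\emph{Step 3 (C-MAPSS illustration).} Substitute $\sigma=12.5$, $p=22$ and $\varepsilon=400$:
\[
\frac{\sigma^2p}{\varepsilon}=\frac{156.25\cdot22}{400}\approx 8.6 .
\]
The quoted $n_{\min}\approx5$ then corresponds to $c_1\approx0.6$. This is a calibrated constant, not one derived here, and I would flag it as such. The conclusion $n_{\min}\ll100$ is robust for any $c_1$ of order one, and that is the substantive claim.
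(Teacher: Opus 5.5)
Your Step 1 is the paper's argument: the corollary carries no separate proof and is simply read off from Theorem~\ref{thm:minimax} by inverting $c_1\sigma^2 p/n \le \varepsilon$ and substituting $\sigma = 12.5$, $p = 22$, $\varepsilon = 400$; you are also right that the quoted $n_{\min}\approx 5$ silently takes $c_1 \approx 0.6$ (since $\sigma^2 p/\varepsilon \approx 8.6$), as the paper never specifies $c_1$. Your Step~2 caveat is well founded. The paper's Fano argument, with its KL bound $n c_3/p$, only certifies the lower bound for $n \le c_4 p$, so either the standard packing with radius scaled to $n$ or your restriction $\varepsilon \ge c_1\sigma^2/c_4$ is needed for the inversion to hold for every $n < n_{\min}(\varepsilon)$.
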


\subsection{Fast Rates Under Low-Noise Conditions}
\label{sec:fast_rates}

The $O(1/\sqrt{n})$ rate in Theorem~\ref{thm:sample_complexity} is pessimistic when the SNR is high. We show that under a Bernstein condition~\cite{bartlett2006empirical}, originating from the margin condition of Mammen and Tsybakov~\cite{mammen1999smooth} and which holds when degradation signals are strong relative to noise, the rate accelerates toward $O(1/n)$.

\begin{definition}[Bernstein Condition for RUL]
\label{def:bernstein}
The RUL prediction problem satisfies the $(\kappa, C_B)$-Bernstein condition if for all $h \in \mathcal{H}$:
\begin{equation}
\label{eq:bernstein_cond}
\mathrm{Var}\bigl[\ell(h) - \ell(h^*)\bigr] \leq C_B \cdot \bigl(\mathcal{R}(h) - \mathcal{R}(h^*)\bigr)^{\kappa},
\end{equation}
where $h^* = \arg\min_{h \in \mathcal{H}} \mathcal{R}(h)$ is the best predictor in $\mathcal{H}$, $\ell(h) = T^{-1}\!\int_0^T\!(h(X(t),t) - R(t))^2 dt$ is the per-trajectory loss, $C_B > 0$ is a constant, and $\kappa \in [0, 1]$~\cite{bartlett2006empirical}. Higher $\kappa$ indicates lower effective noise; $\kappa = 0$ imposes no constraint.
\end{definition}

\begin{theorem}[Fast Rates for RUL Prediction]
\label{thm:fast_rates}
Under Assumptions~\ref{ass:bounded}--\ref{ass:lipschitz} and the $(\kappa, C_B)$-Bernstein condition~\eqref{eq:bernstein_cond}, the excess risk of the empirical risk minimizer $\hat{h}_n = \arg\min_{h \in \mathcal{H}} \hat{\mathcal{R}}_n(h)$ satisfies, with probability at least $1-\delta$:
\begin{equation}
\label{eq:fast_rate}
\mathcal{R}(\hat{h}_n) - \mathcal{R}(h^*) \leq C_1\!\left(\frac{p\log n + \log(1/\delta)}{n}\right)^{\!1/(2-\kappa)},
\end{equation}
where $C_1$ depends on $C_B$ and $B$. In particular:
\begin{enumerate}
\item[(i)] No Bernstein benefit ($\kappa = 0$): rate $= O(n^{-1/2})$, recovering Theorem~\ref{thm:sample_complexity}.
\item[(ii)] Moderate noise ($\kappa = \tfrac{1}{2}$): rate $= O(n^{-2/3})$.
\item[(iii)] Low noise ($\kappa = 1$, realizable): rate $= O(p/n)$, matching the minimax lower bound~\eqref{eq:minimax_lower}.
\end{enumerate}
\end{theorem}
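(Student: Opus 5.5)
The plan is to follow the standard localization route for fast rates, using the same trajectory-level reduction as in the proof of Theorem~\ref{thm:sample_complexity}. Each CDT yields one scalar per hypothesis. We therefore work with the excess-loss class
\[
\mathcal{L} = \{\ell(h) - \ell(h^*) : h \in \mathcal{H}\},
\]
evaluated on the $n$ i.i.d.\ trajectories. Every element of $\mathcal{L}$ is bounded in $[-B^2, B^2]$ and has mean $\mathcal{R}(h) - \mathcal{R}(h^*) \ge 0$. Temporal correlation inside a trajectory is irrelevant, for the reason given in Remark~\ref{rem:temporal}. The Bernstein condition~\eqref{eq:bernstein_cond} then says that every $f \in \mathcal{L}$ satisfies $\mathrm{Var}(f) \le C_B(\mathbb{E}f)^\kappa$. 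This variance-to-mean relation is exactly what drives local Rademacher analysis.

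First, we control the complexity of $\mathcal{L}$. Because the squared loss is $4B$-Lipschitz on $[-B,B]$ in the prediction, and the time average $T^{-1}\int_0^T(\cdot)\,dt$ is a contraction, covering numbers of $\mathcal{L}$ in empirical $L_2$ are controlled by those of $\mathcal{H}$. Pseudo-dimension $p$ gives $\log N(\epsilon) \lesssim p\log(B/\epsilon)$ (Haussler/Pollard). Dudley's entropy integral over a ball of variance radius $r$ then bounds the local Rademacher complexity:
\[
\psi(r) = \mathbb{E}\sup\{\mathfrak{R}_n f : f\in\mathcal{L},\ \mathbb{E}f^2 \le r\} \lesssim \sqrt{\frac{r\,p\log n}{n}} + \frac{B^2 p\log n}{n}.
\]

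Second, we apply Talagrand's concentration inequality in Bousquet's form, through the peeling/fixed-point theorem of Bartlett, Bousquet and Mendelson (or Bartlett--Jordan--McAuliffe~\cite{bartlett2006empirical}). With $T(f) = \mathbb{E}f^2 \le C_B(\mathbb{E}f)^\kappa + (\mathbb{E}f)^2$, the theorem gives, with probability $1-\delta$ and for all $f\in\mathcal{L}$,
\[
\mathbb{E}f \le 2\,\mathbb{E}_n f + c\left(r^* + \frac{\log(1/\delta)}{n}\right)^{\text{appropriate power}}.
\]
Here $r^*$ is the fixed point of $\psi(C_B r^\kappa)$. We then apply this at $f = \ell(\hat h_n) - \ell(h^*)$, where $\mathbb{E}_n f \le 0$ by the definition of the ERM. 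The result is a bound on the excess risk by a solution $\epsilon$ of
\[
\epsilon \lesssim \sqrt{\frac{C_B\,\epsilon^\kappa\,(p\log n + \log(1/\delta))}{n}} + \frac{B^2(p\log n+\log(1/\delta))}{n}.
\]
Solving gives $\epsilon^{2-\kappa} \lesssim C_B(p\log n + \log(1/\delta))/n$. This yields~\eqref{eq:fast_rate}, with $C_1$ depending on $C_B$ and $B$; the second term is dominated for $n \gtrsim p$.

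Finally, the special cases (i)--(iii) follow by substituting $\kappa$ into the exponent $1/(2-\kappa)$. For $\kappa=0$ the bound reduces to the $n^{-1/2}$ rate of Theorem~\ref{thm:sample_complexity}, up to the log factor.

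The main obstacle is the fixed-point/peeling step. The Bernstein condition constrains the variance only through the mean, not directly through the second moment. Converting it into a sub-root function in $r$ needs care: the additive $(\mathbb{E}f)^2$ term must be absorbed using the boundedness $|f|\le B^2$, and the variance localization must be kept consistent with the ERM's empirical inequality. I would first try to use Theorem 3.3 of Bartlett--Bousquet--Mendelson as a black box, with $T(f)=\mathbb{E}f^2$ and $B_0 = C_B$. The remaining work is then only the entropy computation and the algebra of solving the fixed point.
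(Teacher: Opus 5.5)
Your proposal follows essentially the same route as the paper's proof: localized Rademacher complexities in the Bartlett--Bousquet--Mendelson framework, where the Bernstein condition turns localization at excess-risk level $r$ into a variance radius of order $C_B r^{\kappa}$, giving a local complexity of order $\sqrt{C_B r^{\kappa} p\log n/n}$ and a fixed point with $r^{2-\kappa}\asymp C_B\, p\log n/n$; your Lipschitz/Haussler transfer of covering numbers from $\mathcal{H}$ to the trajectory-level loss class and your explicit fixed-point algebra supply details the paper's sketch leaves out. One caution: Theorem 3.3 of Bartlett--Bousquet--Mendelson assumes the linear relation $T(f)\le B_0\,\mathbb{E}f$, which fails for $\kappa<1$, so either invoke the Bernstein-class theorem of Bartlett--Mendelson (\emph{Empirical minimization}, the reference you actually cite, not Bartlett--Jordan--McAuliffe), or apply Theorem 3.3 only on $\{f:\mathbb{E}f\ge\epsilon\}$, where $(\mathbb{E}f)^{\kappa}\le\epsilon^{\kappa-1}\mathbb{E}f$, and then choose $\epsilon^{2-\kappa}$ of order $(p\log n+\log(1/\delta))/n$ with constants depending on $C_B$ and $B$.
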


\begin{proof}
The proof follows the localized Rademacher complexity framework of Bartlett et al.~\cite{bartlett2005local}. Define the localized class $\mathcal{H}_r = \{h \in \mathcal{H} : \mathcal{R}(h) - \mathcal{R}(h^*) \leq r\}$. Under the Bernstein condition, the local Rademacher complexity satisfies $\mathfrak{R}_n(\mathcal{H}_r) \leq C \sqrt{C_B r^{\kappa} p/n}$, where $C > 0$ is a universal constant. The critical radius $r^*$ is the fixed point of $r = c\,\mathfrak{R}_n(\mathcal{H}_r)$, where $c > 0$ is another universal constant from~\cite{bartlett2005local}, which gives $r^* = O((p/n)^{1/(2-\kappa)})$. The excess risk is bounded by $r^*$, yielding~\eqref{eq:fast_rate}. The three regimes follow by substituting $\kappa = 0, \tfrac{1}{2}, 1$.
\end{proof}

The Bernstein exponent $\kappa$ is directly related to the degradation physics. For the model classes in Section~\ref{sec:deg_classes}:

\begin{proposition}[Bernstein Exponent for Degradation Classes]
\label{prop:kappa}
Under the Gaussian channel model~\eqref{eq:degradation} with SNR $\Gamma$ as defined in~\eqref{eq:snr}, where $\bar{\alpha} = \mathbb{E}_\pi[\alpha]$, $\bar{\beta} = \mathbb{E}_\pi[\beta]$, and $\bar{\alpha}_b = \mathbb{E}_\pi[\alpha_b]$ denote the fleet-mean degradation parameters:
\begin{enumerate}
\item[(i)] Exponential degradation: $\kappa = \min(1, \bar{\alpha}/2)$. For $\bar{\alpha} \geq 2$ (strong degradation), $\kappa = 1$ (fast regime).
\item[(ii)] Power-law degradation: $\kappa = \min(1, (\bar{\beta}-1)/2)$. For $\bar{\beta} \geq 3$ (bearing-like), $\kappa = 1$.
\item[(iii)] Stretched-exponential degradation: $\kappa \approx \min(1, \bar{\alpha}_b/2)$ when $b$ is bounded away from zero, inheriting the exponential scaling in $\bar{\alpha}_b$.
\item[(iv)] Near-linear degradation ($\bar{\alpha} \to 0$ or $\bar{\beta} \to 1$): $\kappa \to 0$ (slow rate regime).
\end{enumerate}
\end{proposition}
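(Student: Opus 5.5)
The plan is to compute, for each degradation class, the variance-to-excess-risk ratio in the Bernstein condition~\eqref{eq:bernstein_cond} under the Gaussian channel~\eqref{eq:degradation}. The exponent $\kappa$ is then read off from how this ratio scales near $h^*$. I reduce everything to a one-dimensional local analysis around the fleet-mean parameter.

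First, write the excess loss for the square loss. Set $u = h - h^*$ and $v = h^* - R$. Then
\[
\ell(h) - \ell(h^*) = T^{-1}\!\int_0^T \bigl(u^2 + 2uv\bigr)\,dt .
\]
Its mean is $\mathcal{R}(h)-\mathcal{R}(h^*)$, which is at least $\mathbb{E}\,T^{-1}\!\int u^2$ up to the convexity/projection term. By Cauchy--Schwarz its variance is at most $2B^2\,\mathbb{E}\,T^{-1}\!\int u^2 + 8\,\mathbb{E}\bigl[(T^{-1}\!\int uv)^2\bigr]$. The first term already gives $\kappa=1$. Everything therefore hinges on the cross term, which is controlled by how well the residual $v$ can be pinned down, i.e.\ by the information content of the trajectory.

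Second, I would parametrize the local class through the degradation-state estimate $\hat D^{-1}(X(t))$ (Definition~\ref{def:physics_class}). The key point is that perturbing $h$ near $h^*$ changes the prediction roughly in proportion to the perturbation of the inferred state. Under Gaussian noise, the excess risk of such a perturbation is weighted by the SNR density $\mathcal{I}(\tau) = (dD/d\tau)^2/\sigma^2$ used in Theorem~\ref{thm:censoring}. The variance of the cross term, by contrast, is weighted by the noise alone, so it scales like the excess risk raised to a power set by how singular $\mathcal{I}$ is near $\tau=0$. This is a local Hölder-type comparison.

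Third, I would plug in the classes. For the power law, $\mathcal{I}(\tau)\propto \tau^{2(\bar\beta-1)}$. A perturbation localized at scale $\epsilon$ near $\tau=0$ has excess risk of order $\epsilon^{2\bar\beta-1}$ against variance of order $\epsilon$. Relating these through an interpolation with the bounded range $B$ should give exponent $\min(1,(\bar\beta-1)/2)$. For the exponential class, $\mathcal{I}(\tau)\propto \bar\alpha^2 e^{-2\bar\alpha\tau}$. The analogous comparison trades the decay rate against the noise and gives $\min(1,\bar\alpha/2)$. The stretched exponential follows by the same argument, with the $\tau^{b-1}$ factor handled as a bounded perturbation when $b$ is bounded away from zero. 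The near-linear limit (iv) follows by continuity, since $\bar\alpha\to0$ or $\bar\beta\to1$ drives the exponent to $0$. Finally, the fleet mean enters by averaging over $\pi$ via Jensen and a delta-method step, as in Theorem~\ref{thm:fleet}.

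The main obstacle is the second step: making rigorous the link between the local curvature of the risk and the SNR density, and hence the specific threshold constants $2$ and $(\bar\beta-1)/2$. These are heuristic scalings, not sharp identities. I would first establish a clean two-sided bound of the form $\mathrm{Var} \lesssim C_B\,(\text{excess})^{\kappa}$ using only the weighted-$L^2$ comparison. I would then state the class-specific exponents as the values this comparison yields, flagging the constants as approximate where the computation is not exact, especially in (iii).
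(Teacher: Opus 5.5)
The paper gives no derivation of Proposition~\ref{prop:kappa}; it is stated and then immediately interpreted. So there is no argument to compare against. Judged on its own, your plan has a fatal gap, and your first step, carried one line further, already exposes it.

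Since $h,h^*,R\in[0,B]$, you have $|u|,|v|\le B$. Cauchy--Schwarz then gives $(T^{-1}\!\int uv\,dt)^2\le B^2\,T^{-1}\!\int u^2\,dt$, and hence $\mathrm{Var}[\ell(h)-\ell(h^*)]\le 10B^2\|h-h^*\|^2$. Here $\|\cdot\|$ is the $L^2$ norm under the probability measure ``draw a unit, then $t$ uniformly on $[0,T]$''. The cross term therefore needs no information-theoretic control at all.

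Under the same measure, put $f(x,t)=\mathbb{E}[R(t)\mid X(t)=x,\,t]$. Then $\mathcal{R}(h)-\mathcal{R}(h^*)=\|h-h^*\|^2+2\langle h-h^*,\,h^*-f\rangle$. The inner product vanishes in the realizable case, and it is nonnegative for every $h\in\mathcal{H}$ whenever $\mathcal{H}$ is convex, by the projection inequality.

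Consequently, the $(1,10B^2)$-Bernstein condition holds in two situations, for every degradation class and every $\bar\alpha$, $\bar\beta$, $\sigma$:
\begin{itemize}
\item any convex $\mathcal{H}$, e.g.\ the linear models used for FD001;
\item the realizable setting, where the paper itself asserts $\kappa=1$.
\end{itemize}
The SNR enters only the irreducible term $\|h^*-f\|^2$, never the exponent. Your second-step claim, that the excess risk is $\mathcal{I}(\tau)$-weighted while the variance is not, is false: both are governed by the same unweighted norm of $h-h^*$.

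If $\kappa$ is read as the best attainable exponent, this contradicts (iv) outright. An exponent below $1$ can only come from non-convexity of a misspecified $\mathcal{H}$, e.g.\ the nonlinear dependence of $\hat D^{-1}$ on $\hat{\bm{\theta}}$. Your plan never analyses this, and nothing suggests the result would be a function of $\bar\alpha$ or $\bar\beta$ alone.

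Your scaling heuristics also fail to produce the stated formulas, even heuristically.
\begin{itemize}
\item Power law: variance $\asymp\epsilon$ against excess $\asymp\epsilon^{2\bar\beta-1}$ forces $\kappa\le 1/(2\bar\beta-1)$. This decreases in $\bar\beta$ (it is $1/5$ at $\bar\beta=3$), the opposite monotonicity to $\min(1,(\bar\beta-1)/2)$. Interpolation with $B$ only helps at large excess, so it cannot reverse this.
\item Exponential: $\mathcal{I}(\tau)\propto\bar\alpha^2e^{-2\bar\alpha\tau}$ lies between two positive constants on $[0,1]$. Such a weight changes only constants in a norm comparison, never exponents, so $\bar\alpha/2$ cannot emerge.
\item Stretched exponential (iii): $\tau^{b-1}$ is unbounded at $\tau=0$ for every $b<1$, so it is not a bounded perturbation; $\mathcal{I}$ is not even integrable for $b\le 1/2$.
\item Near-linear (iv): ``continuity'' just reads the limit off the unproven formula. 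The linear limit actually has constant SNR density $1/\sigma^2>0$.
\end{itemize}

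Finally, exponents do not pass through Jensen or the delta method. Because $h^*$ is optimal only on average over $\pi$, the excess risk conditional on $\bm{\theta}$ can be negative, so a per-$\bm{\theta}$ exponent is not even defined. At best, the worst parameters in the support of $\pi$, not $\bar{\bm{\theta}}$, would govern $\kappa$.
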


This explains an empirical puzzle: near-linear degradation (turbofan) requires more data relative to the model complexity than highly nonlinear degradation (bearings), despite the latter having higher intrinsic dimensionality.

\subsection{Misspecified Physics Penalty}
\label{sec:misspecification}

Theorem~\ref{thm:physics_reduction} assumes the physics model is correctly specified. When the assumed degradation class is wrong, the physics-constrained class incurs an approximation error that cannot be reduced by adding data.

\begin{theorem}[Sample Complexity Under Misspecification]
\label{thm:misspecification}
Let the true degradation belong to class $\mathcal{D}_{\mathrm{true}}$ but the learner assume class $\mathcal{D}_{\mathrm{assumed}}$. The population risk of the empirical risk minimizer $\hat{h}_n = \arg\min_{h \in \mathcal{H}_{\mathcal{D}_{\mathrm{assumed}}}} \hat{\mathcal{R}}_n(h)$ satisfies:
\begin{equation}
\label{eq:misspec_bound}
\mathcal{R}(\hat{h}_n) \leq \underbrace{\Delta(\mathcal{D}_{\mathrm{t}}, \mathcal{D}_{\mathrm{a}})}_{\text{approx.\ bias}} + \underbrace{O\!\left(\frac{B^2 p_{\mathcal{D}_{\mathrm{a}}}}{n}\right)}_{\text{estimation}},
\end{equation}
where $\mathcal{D}_{\mathrm{t}} = \mathcal{D}_{\mathrm{true}}$, $\mathcal{D}_{\mathrm{a}} = \mathcal{D}_{\mathrm{assumed}}$, and the approximation bias measures the best $L^2$ fit. Specifically, for any two degradation classes $\mathcal{D}_1$ (true) and $\mathcal{D}_2$ (assumed):
\begin{equation}
\label{eq:approx_bias}
\Delta(\mathcal{D}_1, \mathcal{D}_2) = \inf_{h \in \mathcal{H}_{\mathcal{D}_2}} \int_0^1 \bigl(R_{\mathcal{D}_1}(\tau) - h(\tau)\bigr)^2 d\tau,
\end{equation}
where $R_{\mathcal{D}_1}(\tau)$ denotes the true RUL profile (as a function of normalized time) under class $\mathcal{D}_1$, and $\mathcal{H}_{\mathcal{D}_2}$ is the hypothesis class constrained by the assumed class $\mathcal{D}_2$.
For brevity, we write $\Delta_{\mathrm{A} \to \mathrm{B}} \equiv \Delta(\mathcal{D}_{\mathrm{A}}, \mathcal{D}_{\mathrm{B}})$ for the penalty when the true class is $\mathrm{A}$ and the assumed class is $\mathrm{B}$, and denote the degradation functions from Section~\ref{sec:deg_classes} as $D_{\mathrm{exp}}(\tau; \alpha)$~\eqref{eq:exp_deg}, $D_{\mathrm{pow}}(\tau; \beta) = \tau^\beta$~\eqref{eq:pow_deg}, and $D_{\mathrm{str}}(\tau; \alpha_b, b)$~\eqref{eq:str_deg}.
For specific class pairs:
\begin{enumerate}
\item[(i)] True exponential, assumed linear:
\begin{equation}
\label{eq:mis_exp_lin}
\Delta_{\mathrm{exp} \to \mathrm{lin}} = \frac{\bar{T}^{\,2}\bar{\alpha}^2}{120}\left(1 + O(\bar{\alpha})\right).
\end{equation}
\item[(ii)] True power-law ($\bar{\beta}$), assumed exponential (allowing $\alpha \in \mathbb{R}$ for fitting):
\begin{equation}
\label{eq:mis_pow_exp}
\Delta_{\mathrm{pow} \to \mathrm{exp}} = \bar{T}^{\,2} \cdot \inf_{\alpha \in \mathbb{R}} \int_0^1 \bigl(\tau^{\bar{\beta}} - D_{\mathrm{exp}}(\tau; \alpha)\bigr)^2 d\tau.
\end{equation}
For $\bar{\beta}$ near $1$, $\Delta_{\mathrm{pow} \to \mathrm{exp}} = O((\bar{\beta}-1)^2)$; numerical values are reported in Table~\ref{tab:misspec}.
\item[(iii)] True stretched-exponential, assumed exponential:
\begin{equation}
\label{eq:mis_str_exp}
\Delta_{\mathrm{str} \to \mathrm{exp}} = \bar{T}^{\,2} \cdot \inf_{\alpha \in \mathbb{R}} \int_0^1 \bigl(D_{\mathrm{str}}(\tau; \bar{\alpha}_b, b) - D_{\mathrm{exp}}(\tau; \alpha)\bigr)^2 d\tau.
\end{equation}
For $b$ near $1$, $\Delta_{\mathrm{str} \to \mathrm{exp}} = O((1-b)^2)$; numerical values are in Table~\ref{tab:misspec}.
\end{enumerate}
\end{theorem}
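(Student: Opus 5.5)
The plan is to split the risk into an approximation term and an estimation term, using the ERM property and the earlier concentration results. Let $h^\dagger_{\mathrm{a}} = \arg\min_{h \in \mathcal{H}_{\mathcal{D}_{\mathrm{a}}}} \mathcal{R}(h)$ be the best predictor in the misspecified class. Then
\[
\mathcal{R}(\hat{h}_n) = \mathcal{R}(h^\dagger_{\mathrm{a}}) + \bigl(\mathcal{R}(\hat{h}_n) - \mathcal{R}(h^\dagger_{\mathrm{a}})\bigr).
\]
The first term is deterministic and independent of $n$. Working in normalized time $\tau = t/T$, the risk \eqref{eq:pop_risk} becomes $\int_0^1 (h - R_{\mathcal{D}_{\mathrm{t}}})^2\,d\tau$, averaged over the fleet. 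Hence $\mathcal{R}(h^\dagger_{\mathrm{a}}) = \Delta(\mathcal{D}_{\mathrm{t}}, \mathcal{D}_{\mathrm{a}})$ as defined in \eqref{eq:approx_bias}, up to the fleet average, which I absorb by evaluating at $\bar{\bm\theta}$ as in Theorem~\ref{thm:fleet}.

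For the second (estimation) term I will invoke Theorem~\ref{thm:fast_rates} on the class $\mathcal{H}_{\mathcal{D}_{\mathrm{a}}}$, whose pseudo-dimension is $p_{\mathcal{D}_{\mathrm{a}}}$ by Theorem~\ref{thm:physics_reduction}, with $h^*$ replaced by $h^\dagger_{\mathrm{a}}$. To reach the $O(B^2 p_{\mathcal{D}_{\mathrm{a}}}/n)$ rate I need the Bernstein condition with $\kappa=1$ relative to $h^\dagger_{\mathrm{a}}$. Under misspecification this is not automatic, and I expect it to be the \textbf{main obstacle}.

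My first attempt is convexity. When $\mathcal{G}$ is linear and the physics parameters are held at $\hat{\bm\theta}$, the class is convex and the squared loss is bounded by $B^2$. The standard argument then gives $\mathbb{E}[(\ell(h)-\ell(h^\dagger_{\mathrm{a}}))^2] \le 4B^2\,\mathbb{E}[(h-h^\dagger_{\mathrm{a}})^2] \le 4B^2(\mathcal{R}(h)-\mathcal{R}(h^\dagger_{\mathrm{a}}))$. The last inequality uses the projection characterization of $h^\dagger_{\mathrm{a}}$ onto a convex set in $L^2$, and it yields $\kappa=1$ with $C_B = 4B^2$. If the class is not convex because of the nonlinear $\alpha$ dependence, I fall back to the $\kappa=0$ slow rate. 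I will remark that the displayed bound then holds with $\sqrt{p/n}$, or hold only after localization, where the parametric family is locally convex. The $\log n$ factor from \eqref{eq:fast_rate} I absorb into the $O(\cdot)$.

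The explicit penalties come last. For (i), I parametrize the true RUL in normalized units as $T(1-\tau)$. The linear-physics predictor reads the state as $\hat\tau = D_{\mathrm{exp}}(\tau;\alpha)$, so the residual is $T(D_{\mathrm{exp}}(\tau;\alpha) - \tau)$ after the best affine correction by $g$. I expand $D_{\mathrm{exp}}(\tau;\alpha) = \tau + \tfrac{\alpha}{2}\tau(1-\tau) + O(\alpha^2)$, project $\tfrac{\alpha}{2}\tau(1-\tau)$ onto affine functions in $L^2[0,1]$, and compute the residual norm. The result should be $\bar T^2\bar\alpha^2/120\,(1+O(\bar\alpha))$; checking this constant is a routine Legendre-polynomial computation. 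For (ii) and (iii), the expressions \eqref{eq:mis_pow_exp} and \eqref{eq:mis_str_exp} are just \eqref{eq:approx_bias} specialized, with $T$ pulled out by scaling. The small-parameter orders follow from a first-order Taylor expansion in $\bar\beta-1$ (respectively $1-b$) around the point where the two families coincide ($\beta=1$ matches the $\alpha\to0$ limit, and $b=1$ matches $\alpha=\alpha_b$). There the infimum is bounded by choosing the nearby $\alpha$, which makes the integrand $O((\bar\beta-1)^2)$ (respectively $O((1-b)^2)$).
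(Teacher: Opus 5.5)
Your split of $\mathcal{R}(\hat h_n)$ into $\mathcal{R}(h^\dagger_{\mathrm{a}})$ plus the excess risk of the ERM is the same bias--variance step the paper invokes. However, your computation for (i) does not produce the stated constant. Once you subtract the best affine correction by $g$, the leading residual is the component of $\tfrac{\bar\alpha}{2}\tau(1-\tau)$ orthogonal to $\mathrm{span}\{1,\tau\}$ in $L^2[0,1]$. Write $\tau(1-\tau)=\tfrac16-\tfrac16(6\tau^2-6\tau+1)$; the second shifted Legendre polynomial has squared norm $\tfrac15$. So the projection is the constant $\tfrac16$, and
\[
\min_{a,c\in\mathbb{R}}\int_0^1\Bigl(\tfrac{\bar\alpha}{2}\tau(1-\tau)-a-c\tau\Bigr)^2 d\tau=\frac{\bar\alpha^2}{4}\Bigl(\frac{1}{30}-\frac{1}{36}\Bigr)=\frac{\bar\alpha^2}{720}.
\]
The Legendre check you defer therefore yields $\bar T^{\,2}\bar\alpha^2/720$, six times smaller than \eqref{eq:mis_exp_lin}; it refutes rather than confirms the constant.

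The paper gets $1/120$ precisely because it applies \emph{no} residual correction. It uses $R\approx\bar T(1-D)$ and takes the infimum only over the assumed family's physics parameter, of which the linear class has none. Hence $\Delta_{\mathrm{exp}\to\mathrm{lin}}=\bar T^{\,2}\int_0^1(D_{\mathrm{exp}}(\tau;\bar\alpha)-\tau)^2\,d\tau\approx\bar T^{\,2}\tfrac{\bar\alpha^2}{4}\int_0^1\tau^2(1-\tau)^2\,d\tau=\bar T^{\,2}\bar\alpha^2/120$. Table~\ref{tab:misspec} uses the same convention: its entry $0.030$ at $\alpha=2$ is the uncorrected distance, and with an affine correction it would be about $0.005$. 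To prove (i) as stated you must adopt that convention. Under your literal reading of \eqref{eq:approx_bias} as an infimum over all of $\mathcal{H}_{\mathcal{D}_2}$, the quantity $\bar T^{\,2}\bar\alpha^2/120$ is only an upper bound on $\Delta_{\mathrm{exp}\to\mathrm{lin}}$, not its value.

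Two further points. First, the paper simply asserts the $O(B^2p_{\mathcal{D}_{\mathrm{a}}}/n)$ estimation term. You correctly observe that this needs a $\kappa=1$ Bernstein condition centred at $h^\dagger_{\mathrm{a}}$, but your convex-projection argument delivers it only when $\hat{\bm{\theta}}$ is frozen and $\mathcal{G}$ is linear. In the genuinely nonlinear-in-$\alpha$ case your fallback gives only $O(B^2\sqrt{p_{\mathcal{D}_{\mathrm{a}}}/n})$, so the displayed bound is not established in general; the paper's one-line proof does not close this either. Second, $\mathcal{R}(h^\dagger_{\mathrm{a}})$ is a fleet average, and evaluating it at $\bar{\bm{\theta}}$ is not an identity. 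By Theorem~\ref{thm:fleet} this drops a $\sigma^2_{\mathrm{fleet}}$ term that does not shrink with $n$, so $\mathcal{R}(h^\dagger_{\mathrm{a}})=\Delta$ needs either a degenerate fleet or a fleet-averaged definition of $\Delta$.

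Your Taylor arguments for the orders $O((\bar\beta-1)^2)$ and $O((1-b)^2)$ in (ii)--(iii) are sound. They compare with the $\alpha\to0$ limit and with $\alpha=\bar\alpha_b$ respectively, and they supply something the paper leaves to numerical minimization.
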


\begin{proof}
The decomposition~\eqref{eq:misspec_bound} follows from the standard bias--variance decomposition of the excess risk. The approximation bias~\eqref{eq:approx_bias} is the $L^2$ distance between the true RUL profile and the best approximation under the assumed class; since $R(\tau) \approx \bar{T}(1 - D(\tau))$, this reduces to $\bar{T}^{\,2}$ times the squared $L^2$ distance between the degradation functions. For (i): the exponential degradation function $D_{\mathrm{exp}}(\tau;\alpha) = (1-e^{-\alpha\tau})/(1-e^{-\alpha})$ deviates from the linear $\tau$ by approximately $(\alpha/2)\tau(1-\tau)$ for small $\alpha$; the $L^2$ norm of this residual is $\int_0^1 (\alpha/2)^2\tau^2(1-\tau)^2\,d\tau = \alpha^2/120$, giving~\eqref{eq:mis_exp_lin} (see Appendix~\ref{app:misspec} for details). For (ii)--(iii): the penalties are computed by numerical minimization of $\int_0^1 (D_{\mathrm{true}} - D_{\mathrm{assumed}}(\tau;\theta))^2 d\tau$ over $\theta$ (allowing $\alpha \in \mathbb{R}$ for exponential fits to capture both concave and convex shapes); the resulting values are reported in Table~\ref{tab:misspec}.
\end{proof}

\begin{remark}
Theorem~\ref{thm:misspecification} reveals a bias--complexity tradeoff for physics-informed learning. Using a richer assumed class (e.g., stretched-exponential instead of exponential) reduces the approximation bias $\Delta$ but increases $p_{\mathcal{D}}$, requiring more data for the estimation term. The optimal choice balances these:
\begin{equation}
\label{eq:opt_class}
\mathcal{D}^*(n) = \arg\min_{\mathcal{D}} \left\{\Delta(\mathcal{D}_{\mathrm{true}}, \mathcal{D}) + \frac{B^2 p_{\mathcal{D}}}{n}\right\}.
\end{equation}
When data are very scarce ($n < 20$), even a misspecified simple model can outperform the correctly specified complex model, explaining the ``less is more'' phenomenon observed in small-sample prognostics~\cite{lei2018review}. Note that in Table~\ref{tab:misspec}, the exponential class is extended to $\alpha \in \mathbb{R}$ (rather than $\alpha > 0$ as in~\eqref{eq:exp_deg}) to include both concave and convex shapes, allowing it to approximate convex degradation models such as power-law.
\end{remark}

Table~\ref{tab:misspec} reports the misspecification penalties for all class pairs.

\begin{table}[!t]
\centering
\caption{Misspecification Penalty $\Delta$ (Normalized by $\bar{T}^{\,2}$)}
\label{tab:misspec}
\small
\setlength{\tabcolsep}{4pt}
\begin{tabular}{@{}lccc@{}}
\toprule
\textbf{True $\backslash$ Assumed} & \textbf{Linear} & \textbf{Expon.} & \textbf{Power-law}\\
\midrule
Expon.\ ($\alpha\!=\!2$) & 0.030 & 0 & 0.002\\
Power-law ($\beta\!=\!3$) & 0.076 & 0.001 & 0\\
Str.-exp.\ ($b\!=\!0.7$) & 0.112 & 0.002 & 0.003\\
\bottomrule
\end{tabular}
\end{table}

Fig.~\ref{fig:bias_variance} illustrates the tradeoff for the bearing domain (true power-law, $\beta\!=\!3$).

\begin{figure}[!t]
\centering
\makebox[\columnwidth][c]{%
\begin{tikzpicture}
\begin{axis}[
    name=fig3axis,
    width=0.92\columnwidth, height=6.2cm,
    xlabel={Number of CDTs $n$},
    ylabel={Total RMSE / $\bar{T}$ (\%)},
    xmode=log,
    xmin=5, xmax=500,
    ymin=0, ymax=40,
    xtick={5,10,20,50,100,200,500},
    xticklabels={5,10,20,50,100,200,500},
    ytick={0,10,20,30,40},
    grid=both, grid style={gray!15, thin},
    tick label style={font=\footnotesize},
    label style={font=\small},
    legend to name=fig3legend,
    legend style={
        font=\scriptsize,
        draw=none,
        fill=none,
        row sep=1.5pt,
        legend columns=3,
        /tikz/every even column/.append style={column sep=6pt},
    },
    every axis plot/.append style={line width=1.3pt},
]
\addplot[colPow, dotted, thick, domain=5:500, samples=200,
    mark=square*, mark repeat=30, mark size=1.8pt, mark options={solid, colPow}]
    {sqrt(0.076 + 0.04*3/x)*100};
\addlegendentry{Assumed linear}

\addplot[colStr, densely dashed, domain=5:500, samples=200,
    mark=o, mark repeat=30, mark size=1.8pt, mark options={solid, fill=white, colStr}]
    {sqrt(0.001 + 0.04*4/x)*100};
\addlegendentry{Assumed exponential}

\addplot[colExp, solid, thick, domain=5:500, samples=200,
    mark=triangle*, mark repeat=30, mark size=2pt, mark options={solid, colExp}]
    {sqrt(0.04*4/x)*100};
\addlegendentry{Correct power-law}

\draw[colPow!60, densely dotted, thin] (axis cs:5,27.6) -- (axis cs:500,27.6);
\draw[colStr!60, densely dotted, thin] (axis cs:5,3.2) -- (axis cs:500,3.2);

\draw[gray!50, thin, densely dashed] (axis cs:6,0) -- (axis cs:6,38);
\node[font=\scriptsize, gray!55, anchor=south west] at (axis cs:6.3,35) {$n\!=\!6$};
\end{axis}
\node[anchor=north] at ($(fig3axis.south)+(0,-1.3cm)$) {\ref{fig3legend}};
\end{tikzpicture}}%
\caption{Bias--variance tradeoff (Theorem~\ref{thm:misspecification}) for bearing prognostics (true power-law, $\beta\!=\!3$). Total normalized error $= \sqrt{\Delta/\bar{T}^{\,2} + \alpha_0^2 B^2 p_\mathcal{D}/(\bar{T}^{\,2} n)}$, where $\Delta/\bar{T}^{\,2}$ values are from Table~\ref{tab:misspec} and $\alpha_0$ is a calibrated constant (defined in Section~\ref{sec:validation}). The correct model (solid, $\blacktriangle$) converges to zero; misspecified models plateau at their bias floors (dotted horizontal lines). At $n\!=\!6$ (PRONOSTIA), all physics models outperform unconstrained learning because estimation error dominates.}
\label{fig:bias_variance}
\end{figure}

\section{Numerical Validation}
\label{sec:validation}

We validate the theoretical predictions against published benchmarks across three industrial domains. All physical data come from published sources; no new laboratory or field experiments are conducted. The sole computational addition is a subsampling analysis on the publicly available battery dataset~\cite{severson2019battery} to estimate convergence rates (Section~\ref{sec:validation}, Battery). The protocol is: (i)~extract model and degradation parameters from published datasets, (ii)~compute theoretical predictions with all constants explicit, and (iii)~compare against specific reported performance metrics.

\subsection{Calibration Protocol}

The bound~\eqref{eq:gen_bound} contains a universal constant whose numerical value determines the tightness of predictions. Following standard practice in statistical learning theory~\cite{vapnik1998statistical}, we calibrate this constant using one reference dataset and then hold it \emph{fixed} for all subsequent predictions.

We use the following scaling ansatz for prediction:
\begin{equation}
\label{eq:rmse_bound}
\mathrm{RMSE}_{\mathrm{bound}}(n) = \alpha_0 \cdot B \sqrt{\frac{p_{\mathrm{eff}}}{n}},
\end{equation}
where $\alpha_0$ is a calibrated constant and $p_{\mathrm{eff}}$ is the effective pseudo-dimension (accounting for spectral-norm constraints~\cite{bartlett2019nearly}). Note on the scaling: the $\sqrt{p/n}$ dependence for RMSE corresponds to excess MSE $\propto p/n$, which is the fast rate from Theorem~\ref{thm:fast_rates} ($\kappa = 1$). The standard-rate regime from Theorem~\ref{thm:sample_complexity} alone would give RMSE $\propto (p/n)^{1/4}$, which is slower. We adopt the $\sqrt{p/n}$ scaling because (a)~it is the correct rate for battery and bearing domains ($\kappa = 1$), and (b)~for FD001 ($\kappa \approx 0$), the calibrated constant $\alpha_0$ compensates at the calibration point, although the $n$-dependence is then slightly misspecified. The sample complexity for a target RMSE $\varepsilon_0$ is $n^* = \alpha_0^2 B^2 p_{\mathrm{eff}} / \varepsilon_0^2$.

\emph{Calibration from FD001.} Zheng et al.~\cite{zheng2017lstm} report that an LSTM ($W\!=\!5120$, $L\!=\!2$) achieves RMSE~$=16.2$ with $n\!=\!100$ trajectories on FD001. The effective pseudo-dimension, using the spectral-norm bound of Bartlett et al.~\cite{bartlett2019nearly} for ReLU networks, is $p_{\mathrm{eff}} = O(WL) \approx 320$. Setting the bound~\eqref{eq:rmse_bound} equal to the conservative target RMSE~$=20$ (allowing $1.2\times$ slack above the observed $16.2$) at $n^* = 200$:
\[
\alpha_0 = \frac{20}{125}\sqrt{\frac{200}{320}} = 0.126.
\]
We verify: at $n = 100$, the bound gives $0.126 \times 125\sqrt{320/100} = 28.2$ (RMSE), which exceeds the observed $16.2$ by $1.7\times$, typical conservatism for distribution-free bounds.

This single constant $\alpha_0 = 0.126$ is held fixed for all subsequent predictions across all architectures and domains.

Table~\ref{tab:domain_params} summarizes the domain parameters extracted from the original publications.

\begin{table}[!t]
\centering
\caption{Domain Parameters Extracted from Published Data\textsuperscript{*}}
\label{tab:domain_params}
\footnotesize
\setlength{\tabcolsep}{3.5pt}
\begin{tabular}{@{}lccc@{}}
\toprule
\textbf{Parameter} & \textbf{FD001} & \textbf{Battery} & \textbf{Bearing}\\
\midrule
Sensors, $d$ & 21 & 6 & 2\\
Degradation & Near-linear & Str.-exp. & Power-law\\
Class param. & $\beta\!=\!1.1$ & $b\!=\!0.7,\;\alpha_b\!=\!3$ & $\beta\!=\!3$\\
RUL bound, $B$ & 125\,cyc & 2000\,cyc & 25\,ks\\
Noise, $\sigma/B$ & 0.10 & 0.05 & 0.15\\
Fleet var.\textsuperscript{**}, $\hat{\sigma}_T\!/\bar{T}$ & 0.10 & 0.46 & 0.47\\
Mean life, $\bar{T}$ & 192\,cyc & 806\,cyc & 15\,760\,s\\
$n_{\mathrm{train}}$ & 100 & 124 & 6\\
\bottomrule
\multicolumn{4}{@{}l}{\textsuperscript{*}\footnotesize FD001~\cite{saxena2008cmapss}, Battery~\cite{severson2019battery}, Bearing~\cite{nectoux2012pronostia}.}\\
\multicolumn{4}{@{}l}{\textsuperscript{**}\footnotesize Effective CV after operating-condition normalization (FD001).}
\end{tabular}
\end{table}

\subsection{Turbofan Engines (C-MAPSS FD001)}

We compute all predictions for FD001 and compare with specific published results.

\emph{Theorem~\ref{thm:sample_complexity} (Sample Complexity).} For an LSTM with $p_{\mathrm{eff}} = 320$, $B = 125$, target RMSE $\varepsilon_0 = 20$:
\[
n^* = \frac{\alpha_0^2 B^2 p_{\mathrm{eff}}}{\varepsilon_0^2} = \frac{0.126^2 \times 125^2 \times 320}{20^2} = 198.
\]
Published results: Zheng et al.~\cite{zheng2017lstm} report RMSE $= 16.2$ with $n = 100$; Ramasso and Saxena~\cite{ramasso2014benchmark} report the best methods achieve RMSE $= 12$--$14$. Since $n_{\mathrm{train}} = 100 < n^* = 198$ and observed RMSE $< 20$, the bound is conservative by factor $n^*/n_{\mathrm{train}} = 2.0$.

To test the bound across architectures, we compile results from the C-MAPSS benchmark~\cite{ramasso2014benchmark}. For each method, the predicted sample complexity is $n^*_{\mathrm{pred}} = \alpha_0^2 B^2 p_{\mathrm{eff}} / \varepsilon_0^2$ from~\eqref{eq:rmse_bound} with target $\varepsilon_0 = 20$ RMSE:

\begin{table}[!t]
\centering
\caption{Theorem~\ref{thm:sample_complexity} vs.\ Published C-MAPSS FD001 Results}
\label{tab:fd001_results}
\begin{tabular}{lcccc}
\toprule
\textbf{Method} & $\bm{p_{\mathrm{eff}}}$ & $\bm{n^*_{\mathrm{prediction}}}$ & \textbf{RMSE} & $\bm{n^*/n}$\\
\midrule
RNN~\cite{heimes2008rnn} & 48 & 30 & 23.8 & 0.3\\
LSTM~\cite{zheng2017lstm} & 320 & 198 & 16.2 & 2.0\\
CNN~\cite{li2018cnn} & 480 & 296 & 18.5 & 3.0\\
Transformer~\cite{zhang2022transformer} & 640 & 395 & 14.7 & 4.0\\
\bottomrule
\end{tabular}
\end{table}

Table~\ref{tab:fd001_results} reveals a consistent pattern: $n^*/n > 1$ for all deep models, indicating the bound correctly identifies FD001 as an under-sampled regime for deep architectures. The simple RNN ($p_{\mathrm{eff}} = 48$, $n^*/n = 0.3$) is the only model for which $n_{\mathrm{train}}$ exceeds $n^*$, confirming that its data requirement is easily met. However, its low complexity limits representational capacity, and it correspondingly shows the weakest performance, illustrating that sample sufficiency alone does not guarantee accuracy when the hypothesis class is too restrictive (high approximation error).

\emph{Theorem~\ref{thm:physics_reduction} (Physics Reduction).} With $p_{\mathcal{D}} = d + 2 = 23$: $n^*_{\mathcal{D}} = 198 \times 23/320 = 14$. Physics-informed approaches for turbofan prognostics~\cite{li2024review} report RMSE values competitive with unconstrained deep models (RMSE $\approx 13$--$14$ on FD001), despite having 14$\times$ fewer effective parameters, consistent with the predicted sample complexity reduction.

\emph{Theorem~\ref{thm:fleet} (Fleet Variability).} Although FD001 uses a single operating condition, the 100 engines still exhibit unit-to-unit variability: failure times range from 128 to 362 cycles~\cite{saxena2008cmapss}. The sample standard deviation is $\hat{\sigma}_T = 48.5$ cycles, giving $\hat{\sigma}_T/\bar{T} = 48.5/192 = 0.25$. From~\eqref{eq:fleet_var}: $\sigma_{\mathrm{fleet}} = 0.25 \times 192/\sqrt{3} = 27.7$ cycles. However, this overpredicts the observed RMSE floor ($\sim$10). The discrepancy arises because Theorem~\ref{thm:fleet} treats all failure-time dispersion as fleet variability, whereas part of it reflects initial health differences that published methods can partially resolve through feature engineering. Using only the residual variability after such normalization, effective $\hat{\sigma}_T/\bar{T} \approx 0.10$, giving $\sigma_{\mathrm{fleet}} \approx 0.10 \times 192/\sqrt{3} \approx 11.1$ cycles, within 11\% of the observed floor of 10~\cite{wu2024survey}.

\emph{Theorem~\ref{thm:minimax} (Lower Bound).} From Corollary~\ref{cor:minimum_data} with $p = 22$ (linear model) and $\varepsilon = 400$ (RMSE $= 20$): $n_{\min} \approx 5$. Since $n_{\mathrm{train}} = 100 \gg 5$, FD001 is well above the information-theoretic floor for linear models.

\emph{Theorem~\ref{thm:fast_rates} (Fast Rates).} Near-linear degradation ($\beta = 1.1$) gives $\kappa = \min(1, (1.1 - 1)/2) = 0.05$. The predicted convergence rate exponent is $1/(2-0.05) = 0.513$. We estimate the empirical rate by comparing FD001 ($n = 100$, RMSE~$=13$) with FD003 ($n = 100$, RMSE~$=16$) and FD004 ($n = 249$, RMSE~$=24$)~\cite{ramasso2014benchmark}. Note FD003/FD004 have multiple fault modes, introducing additional complexity not captured by a single $\beta$. A clean within-FD001 comparison requires subsampling results, which are not publicly available. Nevertheless, the absence of fast convergence on C-MAPSS, despite 100 trajectories being substantial, is consistent with $\kappa \approx 0$.

\emph{Theorem~\ref{thm:misspecification} (Misspecification).} For near-linear degradation (power-law $\beta = 1.1$), the misspecification penalty from assuming exponential is negligible: $\Delta_{\mathrm{pow} \to \mathrm{exp}} \approx 0$, or $\sqrt{\Delta} < 1$ cycle. This contributes less than $1\%$ of the observed MSE ($13^2 = 169$), confirming that misspecification is negligible for near-linear degradation and explaining why diverse model types (RNN, CNN, Transformer) achieve similar accuracy.

\subsection{Lithium-Ion Batteries}

\emph{Theorem~\ref{thm:sample_complexity}.} Using~\eqref{eq:rmse_bound} with $\alpha_0 = 0.126$ at $n = 124$ and $p_{\mathrm{eff}} = 640$ (CNN): $\mathrm{RMSE}_{\mathrm{bound}} = 0.126 \times 2000\sqrt{640/124} = 572$ cycles. Severson et al.~\cite{severson2019battery} report 9.1\% test error (mean absolute percentage error, MAPE) on the primary test set; at the mean cycle life of $\bar{T} = 806$, this corresponds to an approximate MAE of ${\sim}73$ cycles. The conservatism ratio is $572/73 = 7.8$. Equivalently, $n^* = 0.126^2 \times 2000^2 \times 640/73^2 = 7{,}627$ for the observed accuracy, while only 124 trajectories are used. The larger conservatism (vs.\ $1.7\times$ for FD001) is expected because battery degradation has higher SNR, making the distribution-free bound looser (see fast-rate analysis below).

\emph{Theorem~\ref{thm:physics_reduction}.} Using the same residual complexity as FD001 for cross-domain comparability ($p_\mathcal{G} = 22$, stretched-exponential $|\bm{\theta}|=2$, giving $p_{\mathcal{D}} = 24$; a domain-specific specification using only the $d=6$ raw battery features would give $p_\mathcal{D} = 9$ and a tighter bound): $\mathrm{RMSE}_{\mathrm{bound,phys}} = 0.126 \times 2000\sqrt{24/124} = 111$ cycles. The ratio $\mathrm{RMSE}_{\mathrm{bound,phys}}/\mathrm{MAE}_{\mathrm{obs}} = 111/73 = 1.5$, far tighter than the unconstrained bound (7.8$\times$), confirming that physics constraints dramatically improve bound tightness. Wang et al.~\cite{wang2024pinn} report competitive accuracy with physics-informed models on subsets of 20--30 cells, consistent with $n^*_{\mathrm{phys}} = \alpha_0^2 B^2 p_\mathcal{D}/\varepsilon_0^2 = 0.0159 \times 4 \times 10^6 \times 24/73^2 \approx 286$, for which $n = 30$ trajectories would give $\mathrm{RMSE}_{\mathrm{bound}} = 225$ cycles ($28\%$ error), matching their reported accuracy.

\emph{Theorem~\ref{thm:fleet}.} From~\cite{severson2019battery}: cycle life ranges from 150 to 2300 cycles, $\bar{T} = 806$, $\hat{\sigma}_T = 370$ cycles, $\hat{\sigma}_T/\bar{T} = 0.46$. From~\eqref{eq:fleet_var}: $\sigma_{\mathrm{fleet}} = 0.46 \times 806/\sqrt{3} = 214$ cycles (26.5\% of mean life). The reported test error on the secondary test set is ${\sim}11\%$ (MAPE $\approx 91$ cycles)~\cite{severson2019battery}, which falls below this floor because Severson et al.\ use early-cycle features that effectively condition on cell-specific parameters, reducing the fleet variance. This highlights a limitation of Theorem~\ref{thm:fleet}: the irreducible floor applies to methods that do not adapt to unit-specific parameters.

\emph{Theorem~\ref{thm:censoring}.} The stretched-exponential with $b = 0.7 < 1$ is concave: degradation rate is highest early and decays with time. The SNR-based efficiency $\eta(c)$ accordingly concentrates early: $\eta(0.12) \approx 0.88$, $\eta(0.50) \approx 0.99$, $\eta(0.80) \approx 1.00$. This predicts that censoring should have negligible impact on state estimation accuracy for batteries.

However, Severson et al.~\cite{severson2019battery} report 9.1\% test error using features from the first 100 cycles ($c \approx 0.12$). Their Fig.~5 shows that prediction error is relatively flat for cycle indices beyond 80 but increases for earlier cutoffs, suggesting an error near ${\sim}19\%$ when using only the first ${\sim}20$ cycles ($c \approx 0.025$) and ${\sim}11\%$ around cycle~400 ($c \approx 0.50$). The improvement from $c = 0.12$ to $c = 0.50$ (${\sim}1.7\!\times$) is far larger than the $\eta$ formula predicts ($\sqrt{\eta(0.50)/\eta(0.12)} \approx 1.06$). This discrepancy reveals a limitation of Theorem~\ref{thm:censoring}: the SNR-based $\eta$ measures how well the \emph{current degradation state} can be estimated, not how well the \emph{future trajectory can be extrapolated to failure}. For concave degradation, early observations carry high signal about the current state but provide weak constraints on the trajectory's curvature near end-of-life, which is critical for RUL prediction. This distinction is less important for convex (power-law) degradation, where both signal strength and extrapolation information concentrate near end-of-life. The censoring analysis in Theorem~\ref{thm:censoring} is therefore most informative for convex degradation classes ($\beta > 1$) and should be interpreted cautiously for concave models.

\emph{Theorem~\ref{thm:fast_rates}.} $\kappa = \min(1, 3/2) = 1$ (fast regime), predicting rate $O(p/n)$. Severson et al.~\cite{severson2019battery} report 9.1\% test error with the full dataset ($n = 124$). To estimate the convergence rate, we reran their elastic net model on random subsamples of the training set, obtaining test errors of approximately 19\% ($n = 20$), 12\% ($n = 60$), and 9.8\% ($n = 100$). Writing the empirical convergence law as error $\propto n^{-\gamma}$, where $\gamma$ is the convergence rate exponent, and fitting by least squares in log-log gives $\hat{\gamma} = 0.41$ (95\% CI: $[0.28, 0.54]$). The predicted $\gamma = 1/(2-\kappa) = 1.0$ lies above this CI, suggesting the true $\kappa$ is lower than 1.0. However, the floor effect from $\sigma_{\mathrm{fleet}}$ masks the intrinsic rate: subtracting the estimated floor of 8\% from all errors and refitting gives $\hat{\gamma} = 1.2$ (CI: $[0.6, 1.9]$), broadly consistent with $\gamma = 1.0$.

\emph{Theorem~\ref{thm:misspecification}.} Assumed exponential instead of stretched-exponential: $\sqrt{\Delta} = \sqrt{0.002} \times 806 = 36$ cycles (4.5\% RMSE). Assumed linear: $\sqrt{\Delta} = \sqrt{0.112} \times 806 = 270$ cycles (33.5\% RMSE). Severson et al.~\cite{severson2019battery} report that a variance-based linear model achieves 20.1\% error, while the full (implicitly nonlinear) model achieves 9.1\%. The performance gap of $20.1 - 9.1 = 11.0$ percentage points is consistent with the predicted $\sqrt{\Delta_{\mathrm{str} \to \mathrm{lin}}} = 33.5\%$ minus $\sqrt{\Delta_{\mathrm{str} \to \mathrm{str}}} = 0$, giving a predicted gap of 33.5\% (the bound is conservative by $3.0\times$).

\subsection{Rolling Element Bearings}

\emph{Theorem~\ref{thm:sample_complexity}.} Simple network ($p_{\mathrm{eff}} = 130$): $\mathrm{RMSE}_{\mathrm{bound}} = 0.126 \times 25000\sqrt{130/6} = 14{,}663$\,s. Published RMSE values range from 1500--4500\,s~\cite{wang2020bearing}, with median $\sim$3000\,s. Conservatism ratio: $14{,}663/3000 = 4.9$. For the target $\mathrm{RMSE} = 3000$\,s: $n^* = 0.0159 \times 25000^2 \times 130/3000^2 = 143$. With only 6 bearings, $n/n^* = 0.04$, the most data-starved regime of the three domains.

\emph{Theorem~\ref{thm:physics_reduction}.} Power-law ($p_{\mathcal{D}} = 4$): $\mathrm{RMSE}_{\mathrm{bound,phys}} = 0.126 \times 25000\sqrt{4/6} = 2572$\,s. With physics, the bound ($2572$\,s) falls \emph{below} the median observed error ($3000$\,s), indicating the 6 bearings are near-sufficient \emph{when physics is incorporated}. The predicted $n^*_{\mathrm{phys}} = 0.0159 \times 25000^2 \times 4/3000^2 = 4.4$, so $n_{\mathrm{train}}/n^*_{\mathrm{phys}} = 6/4.4 = 1.4$, just above sufficiency. Published results confirm this: Lei et al.~\cite{lei2018review} note that physics-based features (Paris' law~\cite{paris1963crack}) reduce bearing RMSE from $>3000$\,s to $<1500$\,s, a $>2\times$ improvement consistent with the $14{,}663/2572 = 5.7\times$ improvement in the bound.

\emph{Theorem~\ref{thm:fleet}.} From the PRONOSTIA training set~\cite{nectoux2012pronostia} (6 bearings across three operating conditions): failure times range from $6200$\,s to $26100$\,s, $\bar{T} = 15760$\,s, $\hat{\sigma}_T = 7450$\,s, $\hat{\sigma}_T/\bar{T} = 0.47$. From~\eqref{eq:fleet_var}: $\sigma_{\mathrm{fleet}} = 0.47 \times 15760/\sqrt{3} = 4277$\,s. Published RMSE values range from 1500--4500\,s~\cite{wang2020bearing}, with the median near 3000\,s. The predicted floor of $4277$\,s overpredicts the median observed RMSE of $3000$\,s by a factor of $1.4$, likely because the 6-bearing sample overestimates the true fleet variability relative to a larger population.

\emph{Theorem~\ref{thm:censoring}.} $\eta(0.5) = 0.5^{2 \times 3 - 1} = 0.031$. $\eta(0.8) = 0.8^5 = 0.328$. It is well established in bearing prognostics~\rev{\cite{lei2018review, she2020mst}} that RMS-based health indicators become discriminative only after 60\%--80\% of life, while frequency-domain indicators achieve earlier detection at $\sim$40\% of life. In our framework, the frequency-domain approach effectively uses a lower $\beta$ (closer to 2), for which $\eta(0.4) = 0.4^3 = 0.064$, still very low but $6\times$ better than $\eta_{\beta=3}(0.4) = 0.4^5 = 0.010$. This quantitatively explains why frequency-domain features enable earlier bearing prognosis.

\emph{Theorem~\ref{thm:misspecification}.} True power-law ($\beta = 3$), assumed exponential: $\sqrt{\Delta} = \sqrt{0.001} \times 15760 = 498$\,s RMSE. Assumed linear: $\sqrt{\Delta} = \sqrt{0.076} \times 15760 = 4343$\,s. Published bearing prognostics results~\cite{lei2018review, nectoux2012pronostia} show that methods using physics-based features (e.g., Paris' law parameters) substantially outperform those using raw statistical features: typical RMSE reductions range from $2\times$ to $3\times$ on PRONOSTIA, consistent with the exponential assumption providing a much better fit than linear ($\Delta$ smaller by nearly two orders of magnitude).

\subsection{Quantitative Summary}

Table~\ref{tab:validation} aggregates all predictions against observed values. We define the \emph{accuracy ratio} $\rho = \max(\text{pred}/\text{obs}, \text{obs}/\text{pred})$ as a symmetric measure of agreement; $\bar{\rho}$ denotes its mean across domains.

\begin{table}[!t]
\centering
\caption{Quantitative Validation: Predicted vs.\ Observed}
\label{tab:validation}
\footnotesize
\begin{tabular}{lp{1.3cm}p{1.3cm}p{1.3cm}c}
\toprule
\textbf{Quantity} & \textbf{FD001} & \textbf{Battery} & \textbf{Bearing} & $\bm{\bar{\rho}}$\\
\midrule
RMSE\textsubscript{bnd}/obs & 28/13 & 572/73 & 14.7k/3k & 5.0\\
$\sigma_{\mathrm{fleet}}$: pred/obs & 11.1/10 & 214/91\textsuperscript{a} & 4.3k/3k & 1.6\\
Physics ratio\textsuperscript{b} & $14\!\times$ & $27\!\times$ & $33\!\times$ & $-$\\
$\kappa$ (Prop.~\ref{prop:kappa}) & 0.05 & 1.0 & 1.0 & $-$\\
$\gamma$: pred/obs & 0.51/$-$ & 1.0/1.2 & 1.0/$-$ & 1.2\\
$\sqrt{\Delta}$: pred/obs\textsuperscript{c} & ${<}1$/$-$ & 33.5/11.0 & 4.3k/3.8k & 2.1\\
\bottomrule
\multicolumn{5}{l}{\textsuperscript{a}\footnotesize Battery uses unit-specific features, reducing fleet var.}\\
\multicolumn{5}{l}{\textsuperscript{b}\footnotesize $p_{\mathrm{eff}}/p_\mathcal{D}$ ratio.}\\
\multicolumn{5}{l}{\textsuperscript{c}\footnotesize Misspec.\ penalty: RMSE in cycles (\%) or seconds (k=1000\,s).}
\end{tabular}
\end{table}

The mean accuracy ratio across all quantitative comparisons is $\bar{\rho} = 2.5$, indicating the distribution-free bounds are within a factor of $2$--$3$ of empirical values on average. The conservatism is smallest for quantities derived from the delta method (fleet variance: $\bar{\rho} = 1.6$) and largest for the generalization bound itself ($\bar{\rho} = 5.0$), consistent with the known looseness of worst-case VC-type bounds~\cite{vapnik1998statistical}. Crucially, the fast-rate analysis (Theorem~\ref{thm:fast_rates}) explains the variation in conservatism: the bound is tightest for turbofan ($\kappa \approx 0$, standard rate) and loosest for battery ($\kappa = 1$, fast rate), because the distribution-free bound does not exploit the low-noise structure.

A candid assessment is that Theorems~\ref{thm:sample_complexity},~\ref{thm:physics_reduction}, and~\ref{thm:fleet} admit direct confrontation with published numbers, whereas Theorems~\ref{thm:fast_rates} and~\ref{thm:censoring} require indirect inference. The fast-rate exponent $\gamma$ for batteries was only recoverable after subtracting an estimated fleet-variance floor ($\hat{\gamma} = 0.41$ raw vs.\ $1.2$ corrected), and the censoring efficiency was tested qualitatively. Notably, the SNR-based $\eta$ formula correctly predicts the large censoring penalty for convex (power-law) degradation but overestimates the information content of early observations for concave (stretched-exponential) degradation, suggesting that an extrapolation-aware information measure is needed for the latter case. This reflects a fundamental challenge: in real benchmarks, fleet variance, approximation bias, estimation error, and convergence rate act simultaneously, making clean isolation of individual effects difficult. Controlled simulation studies would provide more rigorous individual tests.

Fig.~\ref{fig:validation} provides a visual summary across all three domains.

\begin{figure}[!t]
\centering
\begin{minipage}[b]{0.32\textwidth}\centering
\begin{tikzpicture}
\begin{axis}[
    width=\textwidth, height=5.5cm,
    xlabel={$n$ (trajectories)},
    ylabel={RMSE (cycles)},
    xmin=0, xmax=260, ymin=0, ymax=52,
    xtick={0, 50, 100, 150, 200, 250},
    ytick={0, 10, 20, 30, 40, 50},
    grid=both, grid style={gray!15, thin},
    tick label style={font=\tiny},
    label style={font=\scriptsize},
    every axis plot/.append style={line width=1.1pt},
    title={\scriptsize\textbf{(a)} Turbofan (FD001)},
    title style={at={(0.5,1.02)}, anchor=south},
]
\addplot[name path=unc_a, draw=none, domain=10:260, samples=150, forget plot]
    {0.126*125*sqrt(320/x)};
\addplot[name path=phy_a, draw=none, domain=10:260, samples=150, forget plot]
    {0.126*125*sqrt(23/x)};
\addplot[colExp!10, forget plot] fill between[of=unc_a and phy_a];
\addplot[colExp, solid, domain=10:260, samples=200, forget plot]
    {0.126*125*sqrt(320/x)};
\addplot[colPow, densely dashed, domain=10:260, samples=200, forget plot]
    {0.126*125*sqrt(23/x)};
\addplot[colFleet, loosely dotted, domain=0:260, forget plot] {11};
\addplot[only marks, mark=diamond*, mark size=2pt, colData, forget plot]
    coordinates {(92, 23.8)};
\node[font=\tiny, colData, anchor=east] at (axis cs:89, 23.8) {RNN};
\addplot[only marks, mark=square*, mark size=1.8pt, colData, forget plot]
    coordinates {(96, 16.2)};
\node[font=\tiny, colData, anchor=east] at (axis cs:93, 16.2) {LSTM};
\addplot[only marks, mark=triangle*, mark size=2.5pt, colData, forget plot]
    coordinates {(104, 14.7)};
\node[font=\tiny, colData, anchor=south west] at (axis cs:108, 15.0) {Transf.};
\addplot[only marks, mark=o, mark size=1.8pt, colData, forget plot]
    coordinates {(108, 18.5)};
\node[font=\tiny, colData, anchor=north west] at (axis cs:112, 17.8) {CNN};
\end{axis}
\end{tikzpicture}
\end{minipage}%
\hfill
\begin{minipage}[b]{0.32\textwidth}\centering
\begin{tikzpicture}
\begin{axis}[
    width=\textwidth, height=5.5cm,
    xlabel={$n$ (cells)},
    ylabel={RMSE (cycles)},
    xmin=0, xmax=200, ymin=0, ymax=650,
    xtick={0, 50, 100, 150},
    ytick={0, 200, 400, 600},
    grid=both, grid style={gray!15, thin},
    tick label style={font=\tiny},
    label style={font=\scriptsize},
    every axis plot/.append style={line width=1.1pt},
    title={\scriptsize\textbf{(b)} Li-ion Battery},
    title style={at={(0.5,1.02)}, anchor=south},
]
\addplot[name path=unc_b, draw=none, domain=15:200, samples=150, forget plot]
    {0.126*2000*sqrt(640/x)};
\addplot[name path=phy_b, draw=none, domain=15:200, samples=150, forget plot]
    {0.126*2000*sqrt(24/x)};
\addplot[colExp!10, forget plot] fill between[of=unc_b and phy_b];
\addplot[colExp, solid, domain=15:200, samples=200, forget plot]
    {0.126*2000*sqrt(640/x)};
\addplot[colPow, densely dashed, domain=15:200, samples=200, forget plot]
    {0.126*2000*sqrt(24/x)};
\addplot[colFleet, loosely dotted, domain=0:200, forget plot] {214};
\addplot[only marks, mark=*, mark size=2pt, colData, forget plot]
    coordinates {(20,153) (60,97) (100,79) (124,73)};
\end{axis}
\end{tikzpicture}
\end{minipage}%
\hfill
\begin{minipage}[b]{0.32\textwidth}\centering
\begin{tikzpicture}
\begin{axis}[
    width=\textwidth, height=5.5cm,
    xlabel={$n$ (bearings)},
    ylabel={RMSE (ks)},
    xmin=0, xmax=35, ymin=0, ymax=17,
    xtick={5, 10, 15, 20, 25, 30, 35},
    ytick={0, 5, 10, 15},
    grid=both, grid style={gray!15, thin},
    tick label style={font=\tiny},
    label style={font=\scriptsize},
    every axis plot/.append style={line width=1.1pt},
    title={\scriptsize\textbf{(c)} Bearing (PRONOSTIA)},
    title style={at={(0.5,1.02)}, anchor=south},
]
\addplot[name path=unc_c, draw=none, domain=3:35, samples=150, forget plot]
    {min(17, 0.126*25*sqrt(130/x))};
\addplot[name path=phy_c, draw=none, domain=3:35, samples=150, forget plot]
    {0.126*25*sqrt(4/x)};
\addplot[colExp!10, forget plot] fill between[of=unc_c and phy_c];
\addplot[colExp, solid, domain=3:35, samples=200, forget plot]
    {min(17, 0.126*25*sqrt(130/x))};
\addplot[colPow, densely dashed, domain=3:35, samples=200, forget plot]
    {0.126*25*sqrt(4/x)};
\addplot[colFleet, loosely dotted, domain=0:35, forget plot] {4.277};
\addplot[only marks, mark=*, mark size=2pt, colData, forget plot]
    coordinates {(6, 3.0)};
\node[font=\tiny, colData, anchor=south] at (axis cs:6,3.4) {$n\!=\!6$};
\end{axis}
\end{tikzpicture}
\end{minipage}

\vspace{2pt}
\begin{tikzpicture}
\draw[colExp, solid, line width=1.1pt] (0,0) -- (0.55,0);
\node[anchor=west, font=\scriptsize] at (0.6,0) {Unconstr.\ bound};
\draw[colPow, densely dashed, line width=1.1pt] (3.3,0) -- (3.85,0);
\node[anchor=west, font=\scriptsize] at (3.9,0) {Physics bound};
\draw[colFleet, loosely dotted, line width=0.8pt] (6.2,0) -- (6.75,0);
\node[anchor=west, font=\scriptsize] at (6.8,0) {Fleet floor};
\fill[colData] (8.7,0) circle (1.8pt);
\node[anchor=west, font=\scriptsize] at (8.9,0) {Published};
\fill[colExp!10] (10.7,-0.12) rectangle (11.25,0.12);
\node[anchor=west, font=\scriptsize] at (11.3,0) {Physics reduction};
\end{tikzpicture}
\caption{Cross-domain validation (Theorem~\ref{thm:sample_complexity}). \textbf{(a)}~Turbofan: published results fall below bounds; markers show individual architectures (all trained on $n\!=\!100$; $x$-positions jittered for legibility). \textbf{(b)}~Battery: fast convergence ($\kappa\!=\!1$) explains why data fall far below the distribution-free bound. \textbf{(c)}~Bearing: severely data-starved ($n/n^*\!=\!0.04$); only the physics bound is reached by $n\!=\!6$.}
\label{fig:validation}
\end{figure}

\section{Discussion}
\label{sec:discussion}

\subsection{Practical Guidelines}

The results translate into actionable guidelines:

\emph{Guideline~1 (Estimate data needs):} Before data collection, compute $n^*$ from Theorem~\ref{thm:sample_complexity} using the planned model complexity and target accuracy. Divide by the physics reduction factor from Theorem~\ref{thm:physics_reduction} if degradation physics is available.

\emph{Guideline~2 (Prefer simpler models when data are scarce):} The bound~\eqref{eq:nn_bound} shows $n^* \propto WL\log W$. When $n$ is small (e.g., $<50$ CDTs), use physics-constrained or low-complexity models. Reserve deep architectures for large datasets.

\emph{Guideline~3 (Quantify fleet diversity):} Use Theorem~\ref{thm:fleet} to estimate the irreducible error from fleet variability. If $\sigma_\mathrm{fleet}$ exceeds the target accuracy, no amount of data or model complexity will help; the solution is fleet segmentation or unit-specific adaptation.

\emph{Guideline~4 (Value censored data correctly):} Use Theorem~\ref{thm:censoring} to assess whether suspended data contribute meaningfully. For late-degradation mechanisms ($\bar{\beta} > 2$), prioritize CDT data over large quantities of early-censored trajectories.

\emph{Guideline~5 (Choose physics model carefully):} When domain knowledge is uncertain, use Table~\ref{tab:misspec} and Theorem~\ref{thm:misspecification} to evaluate misspecification risk. If the misspecification penalty $\Delta$ exceeds the estimation error $O(p_\mathcal{D}/n)$, the physics constraint hurts more than it helps; revert to a data-driven model or use a more flexible degradation class. \rev{We recommend a three-step model selection workflow: (i)~fit candidate degradation classes to available data; (ii)~estimate $\Delta$ from Table~\ref{tab:misspec} or by leave-one-out cross-validation; (iii)~compare the total predicted error $\sqrt{\Delta + \alpha_0^2 B^2 p_\mathcal{D}/n}$ across candidates and select the class that minimises it for the available $n$ (cf.~equation~\eqref{eq:opt_class}). When the failure mechanism is uncertain, the stretched-exponential class provides a flexible default with only one additional parameter relative to exponential or power-law, at the cost of slightly higher $p_\mathcal{D}$.}

\emph{Guideline~6 (Exploit high-SNR regimes):} For degradation with strong signals ($\bar{\alpha} \geq 2$ or $\bar{\beta} \geq 3$, cf.\ Proposition~\ref{prop:kappa}), the fast rate from Theorem~\ref{thm:fast_rates} implies the effective sample requirement may be much smaller than the worst-case bound suggests. In such regimes, even 10--20 CDTs may suffice for physics-informed models.

\rev{\emph{Worked Example (Battery study planning).} A laboratory plans to cycle lithium-ion cells to end-of-life to train an LSTM-based RUL predictor ($W = 5\,000$, $L = 2$, $p_{\mathrm{eff}} \approx 320$) with a target RMSE of $\varepsilon_0 = 100$ cycles and $\delta = 0.05$. From Table~\ref{tab:domain_params}: $B = 2000$ cycles, $d = 6$, $\bar{T} = 806$ cycles.}

\rev{\emph{Step~1 (Guideline~1).} From equation~\eqref{eq:rmse_bound} with $\alpha_0 = 0.126$ (Section~\ref{sec:validation}): $n^* = \alpha_0^2 B^2 p_{\mathrm{eff}} / \varepsilon_0^2 = 0.126^2 \times 2000^2 \times 320 / 100^2 \approx 2\,030$ CDTs, far exceeding any laboratory budget.}

\rev{\emph{Step~2 (Guideline~2).} Incorporating stretched-exponential degradation physics (Theorem~\ref{thm:physics_reduction}) with a linear residual model ($p_\mathcal{G} = d + 1 = 7$, $|\bm{\theta}| = 2$, $p_\mathcal{D} = 9$) reduces the requirement to $n^*_\mathcal{D} = 0.126^2 \times 2000^2 \times 9 / 100^2 \approx 57$ CDTs, a 97\% reduction, now within practical reach.}

\rev{\emph{Step~3 (Guideline~5).} If the true degradation is stretched-exponential ($b = 0.7$) but exponential is assumed (saving one parameter, $p_\mathcal{D} = 8$), Table~\ref{tab:misspec} gives $\Delta_{\mathrm{str} \to \mathrm{exp}}/\bar{T}^{\,2} = 0.002$, contributing RMSE~$\approx \sqrt{0.002} \times 806 = 36$ cycles, below the 100-cycle target, so the misspecification risk is acceptable.}

\rev{\emph{Step~4 (Guideline~4).} Suppose the laboratory has collected 60 complete CDTs and 200 additional cells are removed at 50\% of life. Theorem~\ref{thm:censoring} gives $\eta_{\mathrm{str}}(0.50) \approx 0.99$ for concave stretched-exponential degradation (Section~\ref{sec:validation}), so each censored trajectory retains 99\% of a complete one. The effective total becomes $n_{\mathrm{eff}} \approx 60 + 200 \times 0.99 = 258$, well above $n^*_\mathcal{D} = 57$.}

\rev{\emph{Step~5 (Guideline~6).} Batteries exhibit strong exponential signals ($\bar{\alpha}_b = 3 \geq 2$), so Proposition~\ref{prop:kappa} predicts $\kappa = 1$ (the fast-rate regime). With 60 CDTs: $\mathrm{RMSE}_{\mathrm{bound}} = 0.126 \times 2000\sqrt{9/60} = 98$ cycles, just meeting the target. The fast rate means that even moderate increases in $n$ yield rapid accuracy gains.}

\subsection{Comparison With Empirical Practice}

The framework explains several established patterns:

(i)~``Complex models show diminishing returns on C-MAPSS''~\cite{ramasso2014benchmark}: the sample size ($n=100$) is marginal for deep networks ($n^* \approx 200$--$400$), so additional complexity increases $p$ without improving the bound.

(ii)~``Physics-informed methods need less data''~\cite{wang2024pinn}: Theorem~\ref{thm:physics_reduction} quantifies this: the reduction factor can exceed $10\times$ for deep networks with well-specified physics.

(iii)~``Bearing prognostics benefit most from feature engineering''~\cite{lei2018review, zhao2026pirnn}: the very small training sets ($n = 6$) relative to $n^*$ make the physics reduction critical; without it, the sample size is grossly insufficient.

(iv)~``Simple models sometimes beat complex ones on small data'': Theorem~\ref{thm:misspecification} provides a precise explanation: when $n$ is small, the estimation error dominates, and a misspecified simple physics model (low $p_\mathcal{D}$, nonzero $\Delta$) can outperform a correctly specified complex model (high $p$, $\Delta = 0$).

(v)~``Battery degradation prediction improves rapidly with data'': batteries have strong exponential signals ($\bar{\alpha}_b \geq 2$), so Proposition~\ref{prop:kappa} predicts $\kappa = 1$, giving the fast $O(p/n)$ rate. This explains the steep learning curves observed in~\cite{severson2019battery}.

\rev{\subsection{Choice of Error Metric}}

\rev{The theoretical framework uses the mean squared error (MSE) loss because it provides a natural connection to the $L^2$ function space, enables the pseudo-dimension and Bernstein-condition machinery, and yields closed-form expressions for all degradation classes. Alternative prognostic metrics, such as the asymmetric scoring function of Saxena et al.~\mbox{\cite{saxena2008cmapss}} that penalises late predictions more heavily, the mean absolute error, or prediction interval coverage probability, capture different practical aspects. Extending the bounds to asymmetric losses is straightforward in principle (the pseudo-dimension framework accommodates any bounded loss), though the closed-form expressions would differ. On the benchmarks studied, published results are evaluated under a variety of metrics including MSE, MAPE, MAE, and the asymmetric scoring function~\mbox{\cite{ramasso2014benchmark}}; the pseudo-dimension framework accommodates any bounded loss, so the sample-complexity ordering derived here carries over to these alternatives, though the closed-form expressions would differ.}

\rev{\subsection{Handling Stochasticity Within and Across Domains}}

\rev{Degradation trajectories exhibit stochasticity at two levels. \emph{Within-domain stochasticity} arises from measurement noise ($\sigma W_i(t)$) and unit-to-unit variability ($\bm{\theta}_i \sim \pi$); the former is absorbed by the noise model~\eqref{eq:degradation} and bounded by the SNR~\eqref{eq:snr}, while the latter is quantified by the fleet variance $\sigma_{\mathrm{fleet}}^2$ in Theorem~\ref{thm:fleet}. The distribution-free bounds (Theorems~\ref{thm:sample_complexity}--\ref{thm:fleet}, \ref{thm:minimax}, \ref{thm:misspecification}) hold for arbitrary noise distributions, including heavy-tailed and non-Gaussian processes.}

\rev{\emph{Cross-domain stochasticity}, i.e., differences in operating conditions, environmental factors, or manufacturing batches between training and deployment, manifests as a distribution shift $\pi \to \pi'$, which adds an $\mathcal{H}$-divergence penalty to the generalization bound (see equation~\eqref{eq:shift} in Section~\ref{sec:limitations} for the formal statement). Physics-informed models offer a natural mitigation strategy: if the degradation physics is invariant across domains (i.e., the same functional form $\mathcal{D}$ applies), only the parameter distribution shifts, and the physics-constrained hypothesis class $\mathcal{H}_\mathcal{D}$ has a smaller $\mathcal{H}$-divergence by construction. Recent work on domain-adaptive prognostics~\mbox{\cite{da2024transfer, ragab2023contrastive}} is complementary to our framework and could be combined with the physics reduction (Theorem~\ref{thm:physics_reduction}) to further reduce cross-domain data requirements.}

\subsection{Limitations and Extensions}
\label{sec:limitations}

\emph{Robustness to the noise model.}
The degradation model~\eqref{eq:degradation} assumes additive Gaussian noise via a Wiener process. It is important to clarify which results depend on this assumption and which do not. The sample complexity bound (Theorem~\ref{thm:sample_complexity}), the physics reduction (Theorem~\ref{thm:physics_reduction}), the fleet decomposition (Theorem~\ref{thm:fleet}), the minimax lower bound (Theorem~\ref{thm:minimax}), and the misspecification analysis (Theorem~\ref{thm:misspecification}) are all \emph{distribution-free}: they require only bounded losses (Assumption~\ref{ass:bounded}) and hold for any noise distribution, including non-Gaussian, heavy-tailed, or multiplicative noise. In contrast, the Bernstein exponents in Proposition~\ref{prop:kappa} and the censoring efficiency in Theorem~\ref{thm:censoring} are derived under the Gaussian channel model. For multiplicative noise $X_i(t) = D(t/T_i;\bm{\theta}_i)(1 + \sigma\zeta_i(t))$, where $\zeta_i(t)$ is a zero-mean noise process, the Fisher information density becomes $\mathcal{I}(\tau) \propto (D'/D)^2$ rather than $(D')^2$, which would modify the censoring efficiency and Bernstein exponents, particularly for degradation classes where $D$ is near zero at early times. For multi-mode failures with mode-switching dynamics, the single-class degradation model~\eqref{eq:degradation} would need to be replaced by a mixture formulation $D(\tau) = \sum_k w_k D_k(\tau;\bm{\theta}_k)$; the sample complexity would then scale with the total pseudo-dimension across all modes. Extending the fast-rate and censoring results to these realistic noise structures is a priority for future work.

\rev{\emph{Failure mutation and abrupt degradation transitions.}}
\rev{As discussed in Remark~\ref{rem:piecewise}, the global Lipschitz condition (Assumption~\ref{ass:lipschitz}) can be relaxed to piecewise-Lipschitz continuity without affecting the distribution-free results. However, systems that exhibit abrupt mode transitions, such as the ``knee'' phenomenon in lithium-ion battery capacity fade~\mbox{\cite{attia2022knee}} or sudden bearing cage failure, pose additional modelling challenges. In such cases, the single-class degradation model~\eqref{eq:degradation} is best replaced by a piecewise or mixture formulation. A change-point extension of the framework, where the degradation class switches at an unknown normalised time $\tau_{\mathrm{cp}}$, would require estimating $\tau_{\mathrm{cp}}$ from data, increasing the effective pseudo-dimension by the number of change-point parameters. We leave the formal development of such an extension to future work.}

\emph{Distribution shift between training and deployment.}
The i.i.d.\ assumption, that training trajectories and deployment units are drawn from the same fleet distribution $\pi$, is the most consequential limitation for field deployment. Laboratory accelerated tests typically use controlled stress profiles, whereas field conditions involve variable loads, ambient temperatures, and duty cycles. When the deployment distribution $\pi'$ differs from the training distribution $\pi$, the generalization bound~\eqref{eq:gen_bound} acquires an additional discrepancy term~\cite{kuznetsov2017generalization}:
\begin{equation}
\label{eq:shift}
\mathcal{R}_{\pi'}(h) \leq \hat{\mathcal{R}}_n(h) + \underbrace{O\!\left(B^2\sqrt{p/n}\right)}_{\text{estimation}} + \underbrace{d_{\mathcal{H}}(\pi, \pi')}_{\text{shift penalty}} \rev{+ \underbrace{\lambda^*}_{\text{adaptability}}},
\end{equation}
where $d_{\mathcal{H}}(\pi, \pi')$ is the $\mathcal{H}$-divergence between the two distributions \rev{and $\lambda^* = \min_{h \in \mathcal{H}}[\mathcal{R}_\pi(h) + \mathcal{R}_{\pi'}(h)]$ is the ideal joint risk, measuring the best achievable performance across both domains simultaneously}. This penalty is zero when training matches deployment\rev{; the $\lambda^*$ term vanishes when a single predictor can serve both domains well, which is the case when physics is shared}. Quantifying and reducing $d_{\mathcal{H}}$ for prognostics, for instance through domain adaptation of degradation features, is a natural next step. The physics reduction (Theorem~\ref{thm:physics_reduction}) may partially mitigate distribution shift, because physics-constrained hypothesis classes have smaller $\mathcal{H}$-divergence by construction: if the degradation physics is the same across domains, only the parameter distribution $\pi(\bm{\theta})$ shifts, not the functional form.

\emph{Other limitations.}
(i)~The constants in the upper bounds remain conservative by a factor of ${\sim}\,2$; PAC-Bayes methods~\cite{mcallester1999pac} could tighten them.
(ii)~The misspecification penalties assume a single degradation mode throughout life; systems with abrupt mode transitions would require change-point extensions.
(iii)~The censoring analysis assumes independent censoring; informative censoring (where sicker units are preferentially removed) would require survival analysis corrections.
(iv)~\rev{Proposition~\ref{prop:eta_T} shows that the} SNR-based censoring efficiency $\eta(c)$ in Theorem~\ref{thm:censoring} \rev{provides an upper bound on RUL prediction efficiency; the gap relative to the extrapolation efficiency $\eta_T(c)$ can be substantial} for concave degradation ($b < 1$), as demonstrated by the battery validation. Extending the censoring analysis to account for extrapolation uncertainty is an important direction.
(v)~Empirical validation of the fast-rate and censoring predictions relied on indirect inference from published benchmarks where multiple effects overlap; controlled experiments on synthetic trajectories would provide cleaner individual tests.

\section{Conclusion}
\label{sec:conclusion}

\rev{This paper established a sample complexity theory for RUL prediction from complete degradation trajectories. The framework comprises seven results organised around three themes: \emph{fundamental rates} (distribution-free bound and minimax lower bound establishing the tight $\Theta(p/n)$ rate), \emph{accelerated learning} (physics-informed reduction, fast Bernstein rates, and misspecification penalties), and \emph{data characteristics} (fleet variability bias--variance decomposition and censoring efficiency). Closed-form expressions for exponential, power-law, and stretched-exponential degradation classes make the results directly applicable. Cross-domain validation against turbofan, battery, and bearing benchmarks confirmed the theoretical predictions within a factor of 2--3 on average.}

\rev{The framework has several limitations. The distribution-free bounds are conservative by a factor of ${\sim}\,2$; the Bernstein exponents and censoring efficiencies assume Gaussian noise; the censoring efficiency $\eta(c)$ is an upper bound for RUL prediction (Proposition~\ref{prop:eta_T}); the censoring analysis assumes independent censoring and does not address informative censoring; the i.i.d.\ assumption does not account for distribution shift between training and deployment; the single-class degradation model does not directly handle abrupt mode transitions; and the empirical validation relied on indirect inference from published benchmarks where multiple effects overlap.}

\rev{Three directions for future work are particularly promising: (i)~extending the fast-rate and censoring theory to multiplicative and non-Gaussian noise models; (ii)~developing distribution-shift corrections for lab-to-field transfer by combining the physics reduction with domain adaptation techniques; and (iii)~online learning with sequentially arriving trajectories, where the sample complexity evolves as new units enter service.}

\appendix

\section{Derivation of Fleet Variance for Exponential Degradation}
\label{app:fleet}

For exponential degradation~\eqref{eq:exp_deg}, the failure time satisfies $D(1; \alpha) = 1$, and the RUL at normalized time $\tau$ is $R(\tau) = T(1-\tau)$. When $\alpha$ varies, $T$ also varies. By the delta method:
\begin{equation}
\mathrm{Var}[R(\tau)] \approx \left(\frac{\partial R}{\partial \alpha}\right)^2 \sigma_\alpha^2.
\end{equation}
Since $T \propto 1/\alpha$ (to first order for large $\alpha$), $\partial R/\partial\alpha = -(1-\tau)T/\alpha$. Integrating over $\tau \in [0,1]$:
\begin{align}
\sigma_\mathrm{fleet}^2 &= \int_0^1 \left(\frac{(1-\tau)T}{\alpha}\right)^2 \sigma_\alpha^2\,d\tau = \frac{T^2\sigma_\alpha^2}{3\alpha^2},
\end{align}
which yields~\eqref{eq:fleet_exp} with $\bar{T} = T(\bar{\alpha})$ being the mean failure time.

\section{Proof of Censoring Efficiency for Power-Law}
\label{app:censor}

For power-law degradation~\eqref{eq:pow_deg}, $dD/d\tau = \beta\tau^{\beta-1}$. The SNR density is $\mathcal{I}(\tau) \propto \beta^2\tau^{2(\beta-1)}$. The efficiency factor is:
\begin{equation}
\eta(c) = \frac{\int_0^c \tau^{2(\beta-1)}\,d\tau}{\int_0^1 \tau^{2(\beta-1)}\,d\tau} = \frac{c^{2\beta-1}/(2\beta-1)}{1/(2\beta-1)} = c^{2\beta-1}.
\end{equation}
For $\beta = 3$, $c = 0.5$: $\eta = 0.5^5 = 0.03125$.

\section{Misspecification Penalty: Exponential--Linear}
\label{app:misspec}

For exponential degradation~\eqref{eq:exp_deg}, $D_{\mathrm{exp}}(\tau;\alpha) = (1 - e^{-\alpha\tau})/(1-e^{-\alpha})$. The misspecification penalty when a linear model ($D_{\mathrm{lin}}(\tau) = \tau$) is assumed is:
\begin{equation}
\Delta_{\mathrm{exp}\to\mathrm{lin}} = \bar{T}^{\,2}\int_0^1 \bigl(D_{\mathrm{exp}}(\tau;\bar{\alpha}) - \tau\bigr)^2 d\tau.
\end{equation}
For small $\bar{\alpha}$, expanding $e^{-\bar{\alpha}\tau} \approx 1 - \bar{\alpha}\tau + \bar{\alpha}^2\tau^2/2$ gives:
\begin{equation}
D_{\mathrm{exp}}(\tau;\bar{\alpha}) - \tau \approx \frac{\bar{\alpha}}{2}\,\tau(1-\tau).
\end{equation}
Integrating:
\begin{align}
\Delta_{\mathrm{exp}\to\mathrm{lin}} &\approx \bar{T}^{\,2}\frac{\bar{\alpha}^2}{4}\int_0^1 \tau^2(1-\tau)^2\,d\tau = \bar{T}^{\,2}\frac{\bar{\alpha}^2}{4}\cdot\frac{1}{30} = \frac{\bar{T}^{\,2}\bar{\alpha}^2}{120},
\end{align}
which yields~\eqref{eq:mis_exp_lin}. The leading-order coefficient $1/120$ arises from the Beta function $B(3,3) = 2!\,2!/5! = 1/30$.


\begin{thebibliography}{99}

\bibitem{nguyen2014cbm}
K.-A.~Nguyen, P.~Do, and A.~Grall, ``Condition-based maintenance for multi-component systems using importance measure and predictive information,'' \emph{Int.\ J.\ Syst.\ Sci.: Oper.\ Logistics}, vol.~1, no.~4, pp.~228--245, 2014.

\bibitem{lei2018review}
Y.~Lei, N.~Li, L.~Guo, N.~Li, T.~Yan, and J.~Lin, ``Machinery health prognostics: A systematic review from data acquisition to RUL prediction,'' \emph{Mech.\ Syst.\ Signal Process.}, vol.~104, pp.~799--834, May 2018.

\bibitem{chen2024mstsensor}
\rev{F.~Chen, Y.~Yu, and Y.~Li, ``An aero-engine remaining useful life prediction model based on clustering analysis and the improved GRU--TCN,'' \emph{Meas.\ Sci.\ Technol.}, vol.~36, no.~1, 2025, Art. no.~016001.}

\bibitem{severson2019battery}
K.~A.~Severson \emph{et al.}, ``Data-driven prediction of battery cycle life before capacity degradation,'' \emph{Nature Energy}, vol.~4, no.~5, pp.~383--391, May 2019.

\bibitem{nectoux2012pronostia}
P.~Nectoux \emph{et al.}, ``PRONOSTIA: An experimental platform for bearings accelerated degradation tests,'' in \emph{Proc.\ IEEE Int.\ Conf.\ Prognostics Health Manag.}, Denver, CO, USA, Jun. 2012, pp.~1--8.

\bibitem{saxena2008cmapss}
A.~Saxena, K.~Goebel, D.~Simon, and N.~Eklund, ``Damage propagation modeling for aircraft engine run-to-failure simulation,'' in \emph{Proc.\ Int.\ Conf.\ Prognostics Health Manag.}, Denver, CO, USA, Oct. 2008, pp.~1--9.

\bibitem{zheng2017lstm}
S.~Zheng, K.~Ristovski, A.~Farahat, and C.~Gupta, ``Long short-term memory network for remaining useful life estimation,'' in \emph{Proc.\ IEEE Int.\ Conf.\ Prognostics Health Manag.\ (ICPHM)}, Dallas, TX, USA, Jun. 2017, pp.~88--95.

\bibitem{li2018cnn}
X.~Li, Q.~Ding, and J.-Q.~Sun, ``Remaining useful life estimation in prognostics using deep convolution neural networks,'' \emph{Reliab.\ Eng.\ Syst.\ Saf.}, vol.~172, pp.~1--11, Apr. 2018.

\bibitem{zhang2022transformer}
Z.~Zhang, W.~Song, and Q.~Li, ``Dual-aspect self-attention based on transformer for remaining useful life prediction,'' \emph{IEEE Trans.\ Instrum.\ Meas.}, vol.~71, pp.~1--11, 2022, Art. no.~3525911.

\bibitem{mo2023transformer}
\rev{X.~Zhang, J.~Sun, J.~Wang, Y.~Jin, L.~Wang, and Z.~Liu, ``PAOLTransformer: Pruning-adaptive optimal lightweight Transformer model for aero-engine remaining useful life prediction,'' \emph{Reliab.\ Eng.\ Syst.\ Saf.}, vol.~240, 2023, Art. no.~109605.}

\bibitem{ding2025foundation}
\rev{Y.-F.~Li, H.~Wang, and M.~Sun, ``ChatGPT-like large-scale foundation models for prognostics and health management: A survey and roadmaps,'' \emph{Reliab.\ Eng.\ Syst.\ Saf.}, vol.~243, 2024, Art. no.~109850.}

\bibitem{wu2024survey}
F.~Wu, Q.~Wu, Y.~Tan, and X.~Xu, ``Remaining useful life prediction based on deep learning: A survey,'' \emph{Sensors}, vol.~24, no.~11, 2024, Art. no.~3454.

\bibitem{li2024review}
H.~Li, Z.~Zhang, T.~Li, and X.~Si, ``A review on physics-informed data-driven remaining useful life prediction: Challenges and opportunities,'' \emph{Mech.\ Syst.\ Signal Process.}, vol.~209, 2024, Art. no.~111120.

\bibitem{zhu2024bayesrul}
\rev{Y.-H.~Lin, P.-C.~Yan, and E.~Zio, ``Recent advances in uncertainty analysis for prognostics and remaining useful life prediction: A review,'' \emph{Reliab.\ Eng.\ Syst.\ Saf.}, vol.~258, 2025, Art. no.~110826.}

\bibitem{vapnik1998statistical}
V.~N.~Vapnik, \emph{Statistical Learning Theory}.\hskip1em New York, NY, USA: Wiley, 1998.

\bibitem{shalev2014understanding}
S.~Shalev-Shwartz and S.~Ben-David, \emph{Understanding Machine Learning: From Theory to Algorithms}.\hskip1em Cambridge, U.K.: Cambridge Univ. Press, 2014.

\bibitem{kuznetsov2017generalization}
V.~Kuznetsov and M.~Mohri, ``Generalization bounds for non-stationary mixing processes,'' \emph{Mach.\ Learn.}, vol.~106, no.~1, pp.~93--117, Jan. 2017.

\bibitem{mohri2012new}
M.~Mohri and A.~Rostamizadeh, ``Rademacher complexity bounds for non-i.i.d.\ processes,'' in \emph{Advances in Neural Information Processing Systems}, vol.~21, 2009, pp.~1097--1104.

\bibitem{raissi2019physics}
M.~Raissi, P.~Perdikaris, and G.~E.~Karniadakis, ``Physics-informed neural networks: A deep learning framework for solving forward and inverse problems involving nonlinear partial differential equations,'' \emph{J.\ Comput.\ Phys.}, vol.~378, pp.~686--707, Feb. 2019.

\bibitem{heimes2008rnn}
F.~O.~Heimes, ``Recurrent neural networks for remaining useful life estimation,'' in \emph{Proc.\ Int.\ Conf.\ Prognostics Health Manag.}, Denver, CO, USA, Oct. 2008, pp.~1--6.

\bibitem{liu2025mstreview}
\rev{Z.~Liu, L.~Zhang, and R.~Hu, ``Advancements in bearing health monitoring and remaining useful life prediction: Techniques, challenges, and future directions,'' \emph{Meas.\ Sci.\ Technol.}, vol.~36, no.~3, 2025, Art. no.~032003.}

\bibitem{bartlett2002rademacher}
P.~L.~Bartlett and S.~Mendelson, ``Rademacher and Gaussian complexities: Risk bounds and structural results,'' \emph{J.\ Mach.\ Learn.\ Res.}, vol.~3, pp.~463--482, Nov. 2002.

\bibitem{lotfi2024pacbayes}
\rev{S.~Lotfi, M.~Finzi, S.~Kapoor, A.~Potapczynski, M.~Goldblum, and A.~G.~Wilson, ``PAC-Bayes compression bounds so tight that they can explain generalization,'' in \emph{Advances in Neural Information Processing Systems}, vol.~35, 2022, pp.~1--14.}

\bibitem{arora2018compression}
\rev{S.~Arora, R.~Ge, B.~Neyshabur, and Y.~Zhang, ``Stronger generalization bounds for deep nets via a compression approach,'' in \emph{Proc.\ 35th Int.\ Conf.\ Machine Learning (ICML)}, Stockholm, Sweden, Jul. 2018, pp.~254--263.}

\bibitem{wang2024pinn}
F.~Wang, Z.~Zhai, Z.~Zhao, Y.~Di, and X.~Chen, ``Physics-informed neural network for lithium-ion battery degradation stable modeling and prognosis,'' \emph{Nature Commun.}, vol.~15, 2024, Art. no.~4332.

\bibitem{zhao2026pirnn}
Q.~Zhao, X.~Zhang, X.~Luo, W.~Liang, and E.~Mbeka, ``A physics-informed recurrent neural network for long sequence remaining useful life prediction of rolling bearing with implicit function,'' \emph{Reliab.\ Eng.\ Syst.\ Saf.}, vol.~274, Oct. 2026, Art. no.~112412.

\bibitem{da2024transfer}
\rev{Y.~Wang, M.~Ragab, Y.~Hou, Z.~Chen, M.~Wu, and X.~Li, ``Deep domain adaptation for turbofan engine remaining useful life prediction: Methodologies, evaluation and future trends,'' \emph{arXiv preprint arXiv:2510.03604}, 2025.}

\bibitem{ragab2023contrastive}
\rev{M.~Ragab, Z.~Chen, M.~Wu, C.-S.~Foo, C.~K.~Kwoh, R.~Yan, and X.~Li, ``Contrastive adversarial domain adaptation for machine remaining useful life prediction,'' \emph{IEEE Trans.\ Ind.\ Inform.}, vol.~17, no.~8, pp.~5239--5249, Aug. 2021.}

\bibitem{ramasso2014benchmark}
E.~Ramasso and A.~Saxena, ``Performance benchmarking and analysis of prognostic methods for CMAPSS datasets,'' \emph{Int.\ J.\ Prognostics Health Manag.}, vol.~5, no.~2, pp.~1--15, 2014.

\bibitem{bayerer2008igbt}
R.~Bayerer, T.~Herrmann, T.~Licht, J.~Lutz, and M.~Feller, ``Model for power cycling lifetime of IGBT modules\textemdash Various factors influencing lifetime,'' in \emph{Proc.\ 5th Int.\ Conf.\ Integr.\ Power Electron.\ Syst.\ (CIPS)}, Nuremberg, Germany, Mar. 2008, pp.~1--6.

\bibitem{vetter2005battery}
J.~Vetter \emph{et al.}, ``Ageing mechanisms in lithium-ion batteries,'' \emph{J.\ Power Sources}, vol.~147, nos.~1--2, pp.~269--281, Sep. 2005.

\bibitem{paris1963crack}
P.~Paris and F.~Erdogan, ``A critical analysis of crack propagation laws,'' \emph{J.\ Basic Eng.}, vol.~85, no.~4, pp.~528--534, Dec. 1963.

\bibitem{attia2022knee}
P.~M.~Attia \emph{et al.}, ``Review\textemdash Knees in lithium-ion battery aging trajectories,'' \emph{J.\ Electrochem.\ Soc.}, vol.~169, no.~6, 2022, Art. no.~060517.

\bibitem{pollard1990empirical}
D.~Pollard, \emph{Empirical Processes: Theory and Applications}.\hskip1em Hayward, CA, USA: IMS, 1990.

\bibitem{bartlett2019nearly}
P.~L.~Bartlett, N.~Harvey, C.~Liaw, and A.~Mehrabian, ``Nearly-tight VC-dimension and pseudodimension bounds for piecewise linear neural networks,'' \emph{J.\ Mach.\ Learn.\ Res.}, vol.~20, no.~63, pp.~1--17, 2019.

\bibitem{ramsay2005functional}
\rev{J.~O.~Ramsay and B.~W.~Silverman, \emph{Functional Data Analysis}, 2nd~ed.\hskip1em New York, NY, USA: Springer, 2005.}

\bibitem{cover2006info}
T.~M.~Cover and J.~A.~Thomas, \emph{Elements of Information Theory}, 2nd~ed.\hskip1em Hoboken, NJ, USA: Wiley, 2006.

\bibitem{yang1999info}
Y.~Yang and A.~Barron, ``Information-theoretic determination of minimax rates of convergence,'' \emph{Ann.\ Statist.}, vol.~27, no.~5, pp.~1564--1599, Oct. 1999.

\bibitem{tsybakov2009nonparametric}
A.~B.~Tsybakov, \emph{Introduction to Nonparametric Estimation}.\hskip1em New York, NY, USA: Springer, 2009.

\bibitem{bartlett2006empirical}
P.~L.~Bartlett and S.~Mendelson, ``Empirical minimization,'' \emph{Probab.\ Theory Related Fields}, vol.~135, no.~3, pp.~311--334, Jul. 2006.

\bibitem{mammen1999smooth}
E.~Mammen and A.~B.~Tsybakov, ``Smooth discrimination analysis,'' \emph{Ann.\ Statist.}, vol.~27, no.~6, pp.~1808--1829, Dec. 1999.

\bibitem{bartlett2005local}
P.~L.~Bartlett, O.~Bousquet, and S.~Mendelson, ``Local Rademacher complexities,'' \emph{Ann.\ Statist.}, vol.~33, no.~4, pp.~1497--1537, Aug. 2005.

\bibitem{wang2020bearing}
B.~Wang, Y.~Lei, N.~Li, and T.~Yan, ``Deep separable convolutional network for remaining useful life prediction of machinery,'' \emph{Mech.\ Syst.\ Signal Process.}, vol.~134, Dec. 2019, Art. no.~106330.

\bibitem{she2020mst}
\rev{D.~She, M.~Jia, and M.~G.~Pecht, ``Sparse auto-encoder with regularization method for health indicator construction and remaining useful life prediction of rolling bearing,'' \emph{Meas.\ Sci.\ Technol.}, vol.~31, no.~10, 2020, Art. no.~105005.}

\bibitem{mcallester1999pac}
D.~A.~McAllester, ``PAC-Bayesian model averaging,'' in \emph{Proc.\ 12th Annu.\ Conf.\ Comput.\ Learn.\ Theory (COLT)}, Santa Cruz, CA, USA, Jul. 1999, pp.~164--170.

\end{thebibliography}
\end{document}